\documentclass[accepted]{uai2024} % after acceptance, for a revised version; 
% also before submission to see how the non-anonymous paper would look like 
%%%%%%%%%%%%%Following packages added from AISTATS PAPER%%%%%%%%%%%%%
% If you use BibTeX in apalike style, activate the following line:
%\bibliographystyle{apalike}

\usepackage{xcolor}
\usepackage{amsfonts}
\usepackage{graphicx}
\usepackage{bbm}
\usepackage{placeins}
\usepackage{subcaption}
\usepackage{dsfont}
\usepackage{amsmath,amssymb,amsthm}
\usepackage{comment}
\usepackage{algpseudocode,algorithm,algorithmicx}
\usepackage{cases}
\newtheorem{theorem}{Theorem}[section]

\newtheorem{lemma}[theorem]{Lemma}

\newtheorem{corollary}[theorem]{Corollary}
% \theoremstyle{definition}

%%%%%%%%%%%%%%%%%%
%% There is a class option to choose the math font
% \documentclass[mathfont=ptmx]{uai2024} % ptmx math instead of Computer
                                         % Modern (has noticeable issues)
% \documentclass[mathfont=newtx]{uai2024} % newtx fonts (improves upon
                                          % ptmx; less tested, no support)
% NOTE: Only keep *one* line above as appropriate, as it will be replaced
%       automatically for papers to be published. Do not make any other
%       change above this note for an accepted version.

%% Choose your variant of English; be consistent
\usepackage[american]{babel}
% \usepackage[british]{babel}

%% Some suggested packages, as needed:
\usepackage{natbib} % has a nice set of citation styles and commands
    \bibliographystyle{plainnat}
    
\usepackage{mathtools} % amsmath with fixes and additions
\usepackage{booktabs} % commands to create good-looking tables
\usepackage{tikz} % nice language for creating drawings and diagrams

%% Provided macros
% \smaller: Because the class footnote size is essentially LaTeX's \small,
%           redefining \footnotesize, we provide the original \footnotesize
%           using this macro.
%           (Use only sparingly, e.g., in drawings, as it is quite small.)

%% Self-defined macros
 % just an example

\title{Distributionally Robust Optimization as a Scalable \\
           Framework to Characterize Extreme Value Distributions}

% The standard author block has changed for UAI 2024 to provide
% more space for long author lists and allow for complex affiliations
%
% All author information is authomatically removed by the class for the
% anonymous submission version of your paper, so you can already add your
% information below.
%
% Add authors
% \author[1]{\href{mailto:<patrick.kuiper@duke.edu>?Subject=Distributionally Robust Optimization as a Scalable Framework to Characterize Extreme Value Distributions}{Patrick~Kuiper}$^*$}
\author[1]{\href{mailto:<patrick.kuiper@duke.edu>?Subject=Distributionally Robust Optimization as a Scalable Framework to Characterize Extreme Value Distributions}{Patrick~Kuiper}\thanks{Authors equally contributed to this work}}
\author[1]{Ali~Hasan$^*$}
\author[2]{Wenhao~Yang}
\author[1]{Yuting~Ng}
\author[3]{Hoda~Bidkhori}
\author[2]{Jose~Blanchet}
\author[1]{Vahid~Tarokh} 
\affil[1]{%
    Dept. of Electrical and Computer Engineering\\
    Duke University\\
    Durham, North Carolina, USA
}
\affil[2]{%
    Dept. of Management Science and Engineering\\
    Stanford University\\
    Stanford, California, USA
}
\affil[3]{%
    Computational and Data Sciences Dept.\\
    George Mason University\\
    Fairfax, Virginia, USA
}

  \begin{document}
\maketitle

\begin{abstract}
The goal of this paper is to develop distributionally robust optimization (DRO) estimators, specifically for multidimensional Extreme Value Theory (EVT) statistics. EVT supports using semi-parametric models called max-stable distributions built from spatial Poisson point processes. While powerful, these models are only asymptotically valid for large samples. However, since extreme data is by definition scarce, the potential for model misspecification error is inherent to these applications, thus DRO estimators are natural. In order to mitigate over-conservative estimates while enhancing out-of-sample performance, we study DRO estimators informed by semi-parametric max-stable constraints in the space of point processes. We study both tractable convex formulations for some problems of interest (e.g. CVaR) and more general neural network based estimators. Both approaches are validated using synthetically generated data, recovering prescribed characteristics, and verifying the efficacy of the proposed techniques. Additionally, the proposed method is applied to a real data set of financial returns for comparison to a previous analysis. We established the proposed model as a novel formulation in the multivariate EVT domain, and innovative with respect to performance when compared to relevant alternate proposals. 
\end{abstract}

\section{Introduction}\label{sec:intro}

Modeling rare and extreme events is an important task in many disciplines such as finance, climate science, and medicine~\citep{dey2016extreme}. Estimating distributions of rare events from data is difficult due to the lack of observations within this region, making it challenging to understand the risks deep in the tail of a distribution. Extreme Value Theory (EVT) studies the class of distributions arising as the possible distributional limits that can be used to estimate multivariate distributions in distant (relative to the origin) regions (i.e., tails) which by their nature witness very few observations (or none at all). These distributional limits are derived as the possible asymptotic statistical laws of shifted and re-scaled data as the sample size increases. It turns out that such possible distributional limits form a semi-parametric class called \emph{max-stable distributions}, which are constructed in terms of a spatial Poisson point process. 

Naturally, because of the lack of data in extremal regions and because of the asymptotic nature of max-stable models, their use in inferential tasks involving tails is exposed to high variance due to model misspecification. Moreover, when using max-stable models, the assumptions that the data converges to the distributions specified by EVT must be made. This leads to an important question: \emph{How can we robustify against scenarios deep in the tails while observing potentially sub-asymptotic data where the assumptions of EVT may be violated?} To answer this question, we propose a solution based on distributionally robust optimization (DRO). DRO involves a zero-sum game in which the statisticians play against an adversary that perturbs (in a non-parametric way) the nominal / baseline distribution assumed by the statistician. Building on classical DRO, we carefully design constraints to retain the extrapolation properties of EVT for the robustified distribution.

Since we are interested in robustifying the tails, we will consider max-stable baseline distributions given by extreme value distributions (EVDs), as presented by \cite{Haan.Ferreira2010}. \emph{Max-stability} roughly states that the distribution of the maximum of independently and identically distributed (i.i.d.) samples belongs to  the same distribution up to a change in the location and scale parameters. This means that the ``shape'' of the distribution is preserved under the $\max$ operation, and it is this property that allows for extrapolating to regions outside the observation domain. In our robustification framework, we therefore wish to preserve this max-stability property. We do this by searching over the space of distributions that are also max-stable, in effect constraining our search to only distributions that extrapolate according to EVT, which form a semi-parametric class. We do this by carefully designing the uncertainty set such that the necessary max-stable properties are preserved. 

To illustrate our desired result, we refer to Figure~\ref{fig:fig1}. This figure visualizes how a completely unconstrained adversary (solid line) may be too conservative and may not consider the extrapolating properties of the distribution. A well constrained adversary to a max-stable distribution (dashed line) provides an appropriate balance of coverage while maintaining underlying structural properties. This figure demonstrates a case when even if we consider two adversarial formulations that achieve a similar minimum error value, the properly constrained adversary (dashed line) is preferable because the size of uncertainty is very hard to calibrate. Therefore, a curve that is "flatter" around the minimum as a function of the uncertainty size parameter is preferable.

\begin{figure}[h]
\begin{center}
\includegraphics[width=0.25\textwidth]{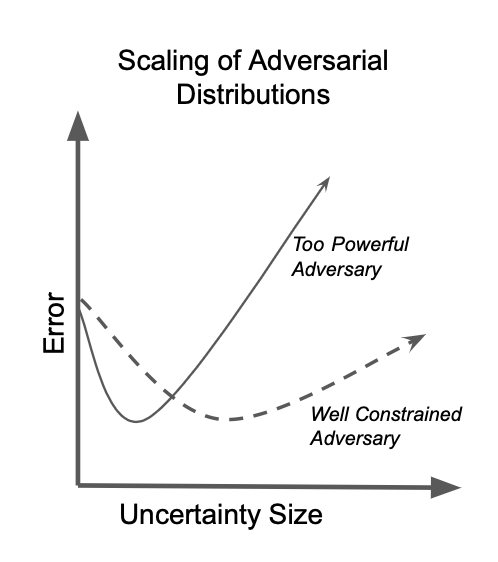}
\caption{Illustration of the error in expected loss with two models under different constraints. Here uncertainty size describes a confidence in the data used for extrapolating EVT distributions, usually quantified by the amount of data available for this process.} 
\label{fig:fig1}
\end{center}
\end{figure}

Our work focuses on the case of multivariate extreme value distributions (MEVs) which characterize the joint risk between different variables. Unlike univariate EVDs which have a fully parametric form, MEVs are semi-parametric, where the dependence structure is an infinite dimensional object, and much harder to estimate. We focus on robustifying the dependence structure while preserving the MEV character. This representation leads to additional difficulties which we overcome through the proposed DRO framework.

In view of the fact that the lack of data will naturally induce model error, we introduce an approach to quantify model misspecification based on optimal transport DRO. We select the Wasserstein metric for optimal transport because this approach, together with moment constraints, encompasses most DRO formulations as demonstrated by~\citet{blanchet2023unifying}.

The cost structure in the optimal transport discrepancy allows one to balance various objectives when modeling data, specifically the trade-off between tractability and control of pessimism in extremal behavior. A too-powerful adversary makes the size of uncertainty hard to calibrate since a slight increase may result in adversarial policies that perturb a distribution in ways that are too pessimistic and may not be consistent with the constraints imposed by EVT.

We mitigate these concerns by studying DRO formulations which constrain the adversarial perturbations to explore non-parametric models that induce robust estimators while preserving MEV characteristics. One of the approaches we employ is based on optimal transport for point processes, while the other is based on a neural network architecture. We illustrate the performance of the neural network architecture in the context of a Conditional Value at Risk (CVaR) metric applied to a multi-variable extreme value distributed data set. The network is evaluated across several synthetic data sets, specifically constructed to challenge the assumptions associated with EVT. The neural network architecture uses the tractability of the optimal transport distance metric to parameterize model uncertainty. Furthermore, our analysis is extended to a real data set of financial returns as a baseline comparison, similar to the data proposed by~\citet{yuen2020distributionally}.This experiment demonstrates precisely the anticipated behavior shown in Figure~\ref{fig:fig1}, where our methodology leverages a properly constrained adversarial estimator.

\textbf{Related Work}
A number of related research directions exist that consider both estimating EVDs from data as well as robustifying them using DRO. Since our focus is on multivariate EVDs, we will review a few of the estimators from the literature. For estimating multivariate EVDs, copula based approaches have been developed in a variety of instances including work by ~\citep{gudendorf2010extreme,marcon2017multivariate, hasan2022modeling}. Samplers for MEVs have also been considered as demonstrated by ~\citep{dombry2016exact,liu2016optimal}. \citet{hasan2022modeling} provides a flexible framework that uses neural networks to estimate and sample from multivariate EVDs irrespective of dimension, which we use in the computational component of this work. However, all of these methods only consider the case where the model is well-specified and do not consider uncertainty associated with the model class. We build upon these works with the addition of the DRO perspective. 

With regards to the DRO literature, \citet{blanchet2019quantifying} described the general framework for Wasserstein DRO that we will use throughout this work. Additional discussion is available in ~\citep{OJMO_2022__3__A4_0} and ~\citep{doi:10.1287/mnsc.2020.3678}. Other DRO frameworks for EVT have been considered, but only in the univariate case, by ~\citep{blanchet2020distributionally,bai2023distributionally}. \citet{blanchet2020distributionally} considers DRO under the Kullback-Liebler divergence, which is appropriate for the univariate analysis but may not be appropriate for the multivariate case where supports are likely to be disjoint.

 Finally, \citet{yuen2020distributionally} considers DRO estimators specifically for MEV distributions leveraging extremal coefficient constraints. 
 In this work, upper and lower bounds are established on a Value at Risk (VaR) loss and applied to a real data set of financial returns. 
 These bounds are established over the infinite domain of spectral measures, using a finite set of constraints formulated as a linear semi-infinite program. 
 This is a limited subset of application problems when compared to our investigation. 
 While the general goal of \citet{yuen2020distributionally} is similar to our framework, our method extends to more general risk measures and considers a flexible uncertainty set specified by the Wasserstein distance. 
 Furthermore, we achieve a single robust loss (upper bound), as opposed to less precise (upper and lower) boundaries. 
 In Section \ref{sec:experiments} we generate a similar data set to \citep{yuen2020distributionally} and employ our proposed method for comparison. 
 In summary, the main contributions of this paper are: 
\begin{itemize}
\item We introduce a framework to produce DRO estimators for MEV distributions based on optimal transport.

\item We provide tractable DRO formulations for various estimators of interest when adversaries live in the (infinite dimensional) space of point processes.

\item We test the performance of our estimators with the goal of showing that our MEV-constrained adversaries improve performance in the sense of Figure \ref{fig:fig1}, and compare to a previous work for baseline analysis.
\end{itemize}
\section{Background and Problem Formulation} 
In this section we focus our discussion on concepts critical to our proposed results and we introduce the framework for distributionally robust estimators in MEV distributions.
We first provide a brief overview of EVT and describe how it is used to extrapolate beyond the observed data. 
We then introduce the multivariate counterpart, which we use throughout this work, to describe EVT in multi-dimensional settings.
Finally, we discuss the distributionally robust optimization framework we use based on the Wasserstein distance. 

\subsection{Extreme Value Theory Background}
\label{sec:MEVD}
We begin by reviewing concepts surrounding MEV distributions. Consider a sequence of $n$ i.i.d. random vectors $\{ {X^{(1)}}, \hdots, {X^{(n)}} \}$, with $X^{(i)} \in \mathbb{R}^d$ and $i = 1, \hdots, n$ and denote the maxima over each dimension as $M_{k,n}:=\max_{i =1}^n X_k^{(i)}$, where $k \in \{ 1, \ldots, d \}$. Similarly to univariate EVT analysis, we consider MEV distributions $G$, where $P((M_{1,n}- b_{1,n})/a_{1, n} \leq z_{1}, \hdots, (M_{d,n}- b_{d,n})/a_{d, n} \leq z_{d}) \rightarrow G(z_{1}, \hdots, z_{d})$, for some normalizing constants $a_{k, n} > 0$ and $b_{k, n}$, as the number of observations $n$ increases to infinity.

\subsection{Spectral Representation of Component-wise Maxima and Asymptotic Characterization}

We will now introduce specific properties of MEV distributions that we will exploit in our framework. Consider i.i.d. random vectors $\{ {X^{(1)}}, \hdots, {X^{(n)}} \}$, with $X^{(i)} \in \mathbb{R}^d$ and $i = 1, \hdots, n$, with unit Frech\'et margins such that $F_{X_k}(x) = \exp(- 1 /x)$, with $x > 0$ for all $k = 1, \ldots, d$. Following~\citet{Coles2001Introduction}, let a sequence $N_1, \hdots, N_n$ be a point process, where $N_n(\cdot)=\sum_{i=1}^{n}\mathds{1}_{\frac{X^{(i)}}{n}}(\cdot)$ with $N_n(\cdot)\overset{d}{\to} N(\cdot)$ as $n \to \infty$ with $d$ denoting convergence in distribution and $N$ is a Poisson point process. We will apply this result further in Section \ref{sec:method} to define the proposed robustificaiton of the Poisson point process.

\paragraph{Decomposing Max-Stable Random Variables}
Max-stable distributions can be decomposed according to the radial and spectral decomposition~\citep{dombry2016exact, liu2016optimal}. Specifically, if we let $Y^{(n)} \in \mathbb{R}^d \sim H$ be a sample from the spectral distribution and $A^{(n)}$ to be the $n^\text{th}$ arrival of a unit rate Poisson point process, then a max-stable random variable is represented as

\begin{equation}
\label{eq:max_stable}
    M = \max_{n\geq 1} \frac{Y^{(n)}}{A^{(n)}}. 
\end{equation}

Under the condition that $\mathbb{E}[Y_{k}] =1$, the variable $M$ is distributed with unit Frech\'et margins. This decomposition provides a semi-parametric class of distributions whose structure we will use throughout the rest of the text. With the spectral decomposition of the MEV we are now able to analyze an MEV distribution. Additionally, we introduce Lemma \ref{lem:thm1} which allows for the explicit transformation of the MEV cumulative distribution function (CDF). 
\begin{lemma}[A corollary of Theorem 1 in \cite{dehaan}]
\label{lem:thm1}
    For ${X^{(k)}}$ with standard Frech\'et marginal distributions, we have:
    \begin{align*}
        P&\left(\frac{M_{1,n}}{n}\le x_1,\cdots, \frac{M_{d,n}}{n}\le x_d \right) \\ &\to\exp\left(-V(x_1,\ldots,x_d)\right),
    \end{align*}
    where
    \begin{align*}
V(x_1,\cdots,x_d)=d\int_{\Delta_{d-1}}\max_{k = 1}^d\frac{w_k}{x_k}H(dw),
    \end{align*}
    $\Delta_{d-1}$ represents the unit $d-$dimensional simplex and $H(\cdot)$ satisfies: $\int w_{k} H(dw)=d^{-1}$ for all $k \in \{1, \ldots, d\} $.
\end{lemma}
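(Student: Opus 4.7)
The plan is to use the Poisson point process convergence $N_n \overset{d}{\to} N$ stated in the paragraph preceding the lemma. First I would rewrite the target event $\{M_{k,n}/n \le x_k \text{ for every } k\}$ as $\{N_n(B_x)=0\}$, where
$B_x = [0,\infty)^d \setminus \prod_{k=1}^d [0,x_k]$
is the failure region in which at least one coordinate exceeds its threshold. Since $B_x$ is bounded away from the origin whenever $x_1,\ldots,x_d>0$, and $N_n$ converges vaguely to $N$ on sets bounded away from $0$, standard Portmanteau arguments for point processes give $P(N_n(B_x)=0)\to P(N(B_x)=0)=\exp(-\mu(B_x))$, where $\mu$ denotes the intensity measure of $N$.

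The heart of the argument is identifying $\mu(B_x)$ with $V(x_1,\ldots,x_d)$. For this I would appeal to the radial--spectral decomposition of $\mu$: because the original variables have unit Fréchet margins and the limit is max-stable, $\mu$ is homogeneous of order $-1$ on $[0,\infty)^d\setminus\{0\}$. In coordinates $r=\sum_k z_k$ and $w=z/r\in\Delta_{d-1}$, this forces the product form $\mu(dr\times dw)=c\,r^{-2}\,dr\,H(dw)$ for some finite spectral measure $H$ on the simplex and a constant $c$ to be fixed by convention. Then
\begin{align*}
\mu(B_x) &= c\int_{\Delta_{d-1}}\int_0^\infty \mathds{1}\{\exists\, k:\,rw_k>x_k\}\,r^{-2}\,dr\,H(dw) \\
&= c\int_{\Delta_{d-1}}\int_{\min_k(x_k/w_k)}^{\infty} r^{-2}\,dr\,H(dw) \\
&= c\int_{\Delta_{d-1}}\max_{k=1}^{d}\frac{w_k}{x_k}\,H(dw),
\end{align*}
using $\int_{a}^{\infty} r^{-2}\,dr = 1/a$ and $1/\min_k(x_k/w_k)=\max_k(w_k/x_k)$.

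To pin down $c$ and the normalization of $H$, I would feed the unit Fréchet marginal constraint back in: setting $x_j=\infty$ for $j\neq k$ collapses the right-hand side of the lemma to $\exp(-(c/x_k)\int w_k\,H(dw))$, which must match $\exp(-1/x_k)$ for every $k$. This forces $c\int w_k\,H(dw)=1$; adopting the paper's convention $c=d$ then yields $\int w_k\,H(dw)=1/d$ and delivers exactly the stated formula for $V$.

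The main obstacle I anticipate is the technical handling of the polar decomposition: one must ensure that $\mu$ is $\sigma$-finite (infinite mass near the origin, finite on sets bounded away from $0$), that the change of variables to $(r,w)$ is rigorously justified, and that Fubini applies on the resulting product measure. Once that infrastructure is in place, the remainder is arithmetic. Invoking Theorem~1 of \cite{dehaan} directly supplies the product form of $\mu$, reducing the lemma to the radial integral and the margin-matching computation above.
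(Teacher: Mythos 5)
Your argument is correct, and it supplies precisely the derivation that the paper omits: the paper gives no proof of this lemma at all, deferring entirely to the citation of de Haan's Theorem~1, so there is no competing approach to compare against. Your route --- rewriting the componentwise-maximum event as the void event $\{N_n(B_x)=0\}$, passing to the limit Poisson process, and evaluating the exponent measure of $B_x$ via the $L^1$ polar decomposition $\mu(dr\times dw)= c\,r^{-2}\,dr\,H(dw)$ --- is the standard proof underlying the cited result, and it dovetails with the point-process convergence the paper states immediately before the lemma. The radial integral and the margin-matching step that fixes $c\int w_k\,H(dw)=1$ (hence $\int w_k\,H(dw)=1/d$ under the convention $c=d$) are both right. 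The only points deserving a word of care, which you correctly flag as infrastructure rather than gaps, are (i) the Portmanteau step requires $B_x$ to be a $\mu$-continuity set, which holds because for each fixed $w$ the boundary meets each ray in a single point and the radial measure $r^{-2}\,dr$ is atomless; and (ii) the homogeneity of order $-1$ of $\mu$, which you use to force the product form, itself follows from max-stability of the limit (or is read off directly from de Haan's theorem, as you note).
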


When dealing with MEVs, the important component in estimation is the spectral measure $H$. When placed in its standardized form, $H$ is a measure with support over the simplex that describes the dependence between the covariates, specifically which components become extreme simultaneously. When a face of the simplex has $H$ non-zero, it implies that covariates associated with the vertices of the face experience extreme events simultaneously. For more details on estimating the spectral measure, we refer the reader to~\citep{Haan.Ferreira2010, gudendorf2010extreme}. In Section~\ref{sec:method}, we describe how $H$ plays the role of an intensity function when taken in the perspective of the point process.

\subsection{Wasserstein DRO Formulations for EVDs}
Referencing the framework to quantify model uncertainty via DRO described by~\citet{blanchet2020distributionally}, we define the probability space $(\Omega, \mathcal{F}, P)$, where a candidate robust distribution $P$ is feasible such that $d(P, P_0) \leq \delta$. $P_0$ is a baseline distribution and $d(\cdot)$ is a distance measure, constrained by the parameter $\delta$. Two methods are commonly employed to quantify model uncertainty when constructing distributional ambiguity sets. The first considers the corruption of the likelihood baseline model to be misspecified, which is addressed via $\phi$-divergence ambiguity sets. The second method perturbs the actual data, which leads to the use of Wasserstein distances to quantify model misspecification.  

A recent investigation by~\citet{blanchet2023unifying} has demonstrated that both considerations of likelihood and perturbations of data may be unified under the Wasserstein distance, where $d(\cdot) = W_c(\cdot)$. Consider a loss function $\ell : \mathbb{R}^d \to \mathbb{R}_+$ and the Wasserstein distance transport cost $c: \mathbb{R}^d \times \mathbb{R}^d \to \mathbb{R}_+$. We define the primal optimization problem as follows:
\begin{equation}
\max_{P : W_c(P, P_0) \leq \delta} \mathbb{E}_{X \sim P}[\ell(X)]
\label{eq:primal}
\end{equation}
In the context of our problem, $P_0$ could be the estimated distribution from available samples, but one may not have enough coverage since rare events may not have been observed within the data collection period. Following~\citet{blanchet2019quantifying}, the dual form of~\eqref{eq:primal} is given by the following problem
\begin{equation}
    \label{eq:dual}
    \min_{\lambda \geq 0}  \left [ \lambda \delta + \mathbb{E}_{X \sim P_0}\left[ \max_{Z} \ell(Z) - \lambda c(X, Z) \right ] \right ].
\end{equation}
As we will discuss, the dual form is particularly amenable for computational purposes, especially when one can exploit specific properties of $c$, $\ell$, and $P$. Our investigation will describe two perspectives of~\eqref{eq:dual} for EVDs. The first we present in Section \ref{sec:method}, and is based on interpreting an MEV through a point process and the second is based on the interpretation using a copula. Each perspective has different properties that are useful for computational purposes, which we will discuss in Section~\ref{sec:CDF}. 
\section{Robustification of the Poisson Point Process}
\label{sec:method}
In this section, we provide a more general formulation for DRO problems using a characterization with a point process. 
For the probability space $(\Omega, \mathcal{F}, P)$, we denote a counting measure $N(\cdot)$ on a Polish space $(S,d)$. 
We define $N(\cdot)=\sum_{i=1}^{\infty}\mathds{1}_{x_i}(\cdot)$ and $N'(\cdot)=\sum_{i=1}^{\infty}\mathds{1}_{y_i}(\cdot)$ as two random counting measures. 
Additionally, we define scaling functional $\kappa: \mathbb{Z}\rightarrow \mathbb{R}_{\ge0}$, and a lower semi-continuous cost function $c:S\times S\rightarrow\mathbb{R}_{\ge0}$. 
Let $\sigma(\cdot)$ denote the permutation function defined on the support of $N(\cdot)$, and we then define the distance as:
\begin{align}
    \widetilde{c}(N(\cdot), N'(\cdot))=&\infty\mathds{1}_{N(S)\not=N'(S)\text{ or }N(S)\vee N'(S)=\infty} \notag\\
    &+ \kappa(N(S))\inf_{\sigma(\cdot)}\sum_{i=1}^{N(S)}c(x_i, y_{\sigma(i)}).
\end{align}

This distance is also similar to the ones presented by \cite{barbour1992stein, chen2004stein, gao2023distributionally}. Additionally, we define the 1-Wasserstein distance between two random counting measure by:
\begin{align}
    W_{c}(\mu,\nu)=\inf_{\gamma\in\Gamma(\mu,\nu)} \mathbb{E}_{(N, N')\sim\gamma}\widetilde{c}(N, N'),
\end{align}
Here $\mu$ is the measure corresponding to $N(\cdot)$ while $\nu$ is the measure corresponding to $N'(\cdot)$. Analogously to standard DRO problems, we can write down a new formulation for this DRO problem with point process as follows:
\begin{theorem}[Primal and Dual Form for Point Processes]
\label{thm:dro_pp}
Let $N(\cdot)$ denote a Poisson point process, $f(\cdot)$ denote a Borel function, and $W_c(P, P_0)$ denote the Wasserstein distance between measures induced by $N'$ and $N$ under cost metric $c$. For a distance $\delta \geq 0$, the following problems are equivalent:
\begin{align*}
& \sup_{ W_c(P, P_0) \leq \delta} \mathbb{E}_{N' \sim P} \left[\ell(N'(f))\right] = \\ \inf_{\lambda \ge 0}  & \left \{\lambda \delta  + \mathbb{E}_{N\sim P_0} \left [\sup_{N'}\left [ \ell(N'(f) ) - \lambda \widetilde{c}(N, N')\right ]\right ] \right \},
\end{align*}
where $N(f):=\int f(x)N(dx)$.
\begin{proof}
    This follows from the application of~\citet{blanchet2019quantifying}. 
\end{proof}
\end{theorem}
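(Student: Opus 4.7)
The plan is to derive this as a special instance of Wasserstein strong duality on the Polish space of locally finite counting measures on $S$ equipped with the vague topology, following the general program of \citet{blanchet2019quantifying}. First I would recast the primal in Kantorovich form: since $W_c(P,P_0)\le\delta$ is equivalent to the existence of a coupling $\gamma\in\Gamma(P_0,P)$ with $\mathbb{E}_\gamma[\widetilde{c}(N,N')]\le\delta$, the primal becomes a supremum of $\mathbb{E}_\gamma[\ell(N'(f))]$ over all couplings $\gamma$ whose first marginal is fixed at $P_0$, whose second marginal is free, and which satisfy the $\delta$-budget constraint. Forming the Lagrangian
\[
L(\gamma,\lambda)=\mathbb{E}_\gamma\bigl[\ell(N'(f))-\lambda\widetilde{c}(N,N')\bigr]+\lambda\delta,\qquad \lambda\ge 0,
\]
and swapping sup/inf immediately yields the weak-duality bound $\text{primal}\le \inf_{\lambda\ge 0}\sup_\gamma L(\gamma,\lambda)$.

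Next I would collapse the inner sup by disintegrating $\gamma=P_0\otimes Q$ along a regular conditional distribution $Q(\cdot\mid N)$. For fixed $\lambda$ the integrand factors pointwise in $N$, giving
\[
\sup_\gamma \mathbb{E}_\gamma\bigl[\ell(N'(f))-\lambda\widetilde{c}(N,N')\bigr]=\mathbb{E}_{N\sim P_0}\!\left[\sup_{N'}\bigl(\ell(N'(f))-\lambda\widetilde{c}(N,N')\bigr)\right],
\]
because the inner optimization over $Q(\cdot\mid N)$ is attained (in the relaxed sense) by a Dirac mass at an $N$-dependent maximizer. Once this pointwise reduction is in place, the reverse (strong-duality) inequality follows in the Blanchet--Murthy framework from a Fenchel-type argument on the convex dual function $\lambda\mapsto\sup_\gamma L(\gamma,\lambda)-\lambda\delta$, whose minimizer can be used to assemble a feasible primal coupling that saturates the bound.

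The main obstacle is rigorously justifying the interchange of supremum and expectation. This requires two technical inputs: (i) measurability of $N\mapsto \sup_{N'}\bigl[\ell(N'(f))-\lambda\widetilde{c}(N,N')\bigr]$, which I would obtain from lower semi-continuity of $c$ (and hence of $\widetilde{c}$ on the matching-mass stratum), continuity of the linear functional $N'\mapsto N'(f)$ under the vague topology for suitable $f$, and measurable projection; and (ii) existence of a measurable $\varepsilon$-selector $N\mapsto N'_\varepsilon(N)$ needed to build a near-optimal coupling, provided by the Kuratowski--Ryll-Nardzewski selection theorem on the Polish space of counting measures. The trickiest piece is handling the infinite-penalty branch of $\widetilde{c}$ that forbids couplings with mismatched total masses: one must check that it does not destroy lower semi-continuity and that the Lagrangian remains well-defined. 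This is actually benign, since any $\gamma$ with $\mathbb{E}_\gamma[\widetilde{c}]<\infty$ concentrates on pairs with a common finite total count, so the infinity branch never contributes and the scaling factor $\kappa(N(S))$ acts as a bounded multiplier under $P_0$-expectation. With these pieces in place, the identity in the theorem is precisely the specialization of the unified Wasserstein DRO duality of \citet{blanchet2019quantifying} to the point-process cost $\widetilde{c}$.
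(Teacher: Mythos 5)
Your proposal follows essentially the same route as the paper, which simply invokes the strong duality of \citet{blanchet2019quantifying}; you have expanded that one-line citation into the underlying Lagrangian/Kantorovich argument, including the measurable-selection and lower-semicontinuity details needed to apply it on the Polish space of counting measures. The reasoning is sound and consistent with the paper's intent, so no further comparison is needed.
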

As mentioned in the background material in~\eqref{eq:max_stable}, MEV observations are given by the product of  radial $(A^{(n)})$ and spectral $(Y^{(n)})$ components. In the perspective of the point process, the associated measures for each of these components forms the intensity of the point process. Specifically, we let the atoms of the point process be given by $(A^{(n)}, Y^{(n)})$ where $A^{(n)}$ can be thought of as a time coordinate (since $A^{(n)}$ is the $n^\text{th}$ arrival time) and $Y^{(n)}$ as a space (or the mark) component. Define $N(da, dv) = \sum_{n=1}^\infty \mathds{1}_{((A^{(n)}, Y^{(n)})} (da, dv)$. This provides a useful characterization in terms of the dual problem insofar as we can optimize over the \emph{boundaries} of the integrals to form our adversary point process. As we will illustrate, this is equivalent to finding a new point process with modified intensity. 
 
With regards to the cost in the Wasserstein distance, we place an infinite cost on changes in the arrival times of the events. This has the interpretation that our uncertainty is not on the arrival time of the tail events but rather on the dependence. We formalize this through the following cost function between two points:
\begin{equation}
c((s, u), (t,v)) = \infty \mathds{1}_{s \neq t} + |u - v| \mathds{1}_{s = t}
\label{eq:cost}
\end{equation}
Our Wasserstein cost is then between the spectral component while fixing the radial component. We formalize this case in the following corollary.
\begin{corollary}
    \label{cor:trans_unit}
    For the given setting in Theorem~\ref{thm:dro_pp}, if the point process satisfies $N(x)=\lim_{n}\sum_{i=1}^n\mathds{1}_{X^{(i)}/n}(x)$, where $X^{(i)}$ are i.i.d. standard Frech\'et margins. Then, we can rewrite the primal problem as:
    \begin{align}
        \sup_{W_{c}(P, P_0)\le \delta}\mathbb{E}_{\widetilde{N}\sim P}\left[\ell(\widetilde{N}(f\circ T^{-1})\right],
    \end{align}
    where $\widetilde{N}(da, dv)=\sum_{n}\mathds{1}_{(A^{(n)}, Y^{(n)})}(da, dv)$ and $T(X)=(1/\|X\|_1, X/\|X\|_1)$. 
    Under $P_0$, $A^{(n)}$ is the n-th arrival of the Poisson point process with intensity 1 and $Y^{(n)}$ follows the spectral measure $H(\cdot)$. 
    Here we introduce the composition of the Borel function $f$ on the transformation $T$.
\end{corollary}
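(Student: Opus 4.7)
The plan is to apply the bijection $T(X) = (1/\|X\|_1, X/\|X\|_1)$ atom-by-atom to the point process $N$ from Theorem~\ref{thm:dro_pp} and verify that the pushforward reproduces every ingredient of the claimed reformulation. First I would invoke the classical EVT point-process limit (Theorem~1 of \citet{dehaan} and the discussion in Section~\ref{sec:MEVD}): because the $X^{(i)}$ are i.i.d.\ with standard Fréchet margins, the limit $N(\cdot) = \lim_n \sum_{i=1}^n \mathds{1}_{X^{(i)}/n}(\cdot)$ is a Poisson point process on $[0,\infty)^d \setminus \{0\}$ whose mean measure admits the polar factorization proportional to $r^{-2}\,dr \otimes H(d\omega)$ on $(0,\infty) \times \Delta_{d-1}$, with $H$ the normalized spectral measure from Lemma~\ref{lem:thm1}. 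This is the distribution of $N$ under $P_0$.

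Next, I would apply the mapping theorem for Poisson point processes to $T$. The substitution $a = 1/r$ pulls the radial intensity $r^{-2}\,dr$ back to Lebesgue measure $da$, so the image is a Poisson point process on $(0,\infty) \times \Delta_{d-1}$ whose radial/time coordinate is a unit-rate homogeneous Poisson process and whose marks are drawn i.i.d.\ from $H$ (the overall constant from Lemma~\ref{lem:thm1} being absorbed into the normalization $\int w_k H(dw) = d^{-1}$). Enumerating atoms by their $a$-coordinate produces exactly the sequence $(A^{(n)}, Y^{(n)})_{n \ge 1}$ appearing in the corollary, so $\widetilde N$ has the stated distribution under $P_0$. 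The identity $N(f) = \widetilde N(f \circ T^{-1})$, with $T^{-1}(a,v) = v/a$, is then just the change-of-variables formula for integrals against counting measures, giving $\ell(N(f)) = \ell(\widetilde N(f \circ T^{-1}))$ atom-wise.

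Finally, I would argue that the Wasserstein ball transports verbatim. Because the cost \eqref{eq:cost} places infinite mass on any mismatch in the arrival/radial coordinate and charges only $|u - v|$ for displacements of the spectral component, the map $N \mapsto \widetilde N$ is an isometry with respect to $\widetilde c$ on the relevant class of counting measures, so any admissible coupling between $P$ and $P_0$ pushes to an admissible coupling between their images and conversely. Combining these identities with Theorem~\ref{thm:dro_pp} produces the advertised reformulation. The main obstacle I anticipate is technical and concerns the low-radius regime: the intensity $r^{-2}$ is non-integrable at $r = 0$, so one has to verify carefully that $T$ is well-defined on the full configuration, that the enumerated arrivals $A^{(n)}$ form an almost surely well-ordered sequence with $A^{(n)} \uparrow \infty$, and that $T$ preserves local finiteness, ensuring that the supremum over adversarial counting measures in the dual of Theorem~\ref{thm:dro_pp} is faithfully preserved on the appropriate Polish space.
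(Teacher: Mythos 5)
Your proposal is correct and follows essentially the same route as the paper's proof: the polar factorization of the limiting Poisson intensity into $r^{-2}\,dr \otimes H(dw)$, the change of variables $a = 1/r$ turning the radial part into a unit-rate arrival process with i.i.d.\ marks from $H$, and the identity $N(f) = \widetilde{N}(f \circ T^{-1})$. You are in fact more thorough than the paper, which does not explicitly address the preservation of the Wasserstein ball under $T$ or the local-finiteness issues near $r=0$ that you flag.
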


\section{Optimizations for Specific Loss Functions}
\label{sec:closed_form}
Having introduced the different perspectives of the optimization, we provide specific loss functions where each perspective is particularly conducive towards specific robustification cases.
\subsection{Robustifying the CDF}
\label{sec:CDF}
Consider the cumulative density function (CDF) of $M\in\mathbb{R}^d$ satisfying $M_i=\max_{n=1}^{\infty}\frac{Y_i^{(n)}}{A^{(n)}}$, where $Y^{(n)}\in\mathbb{R}^d$ are i.i.d. random vectors and $A^{(n)}$ is the $n$-th arrival time of a unit Poisson point process. 
The CDF of $M$ satisfies:
\begin{align}
   \nonumber P(M_1 \leq x_1^{-1}, &\ldots,  M_d \leq x_d^{-1} ) = \\ & \exp\left ( - d\int_{\Delta^{d-1}} \max_{i=1}^d x_i w_i H\left(\mathrm{d}w\right)\right).   \label{eq:cdf}
\end{align}
Here, Equation \ref{eq:cdf} is derived from Lemma \ref{lem:thm1} by using the max stable decomposition of the random variable, as demonstrated in Equation \ref{eq:max_stable}. 
We will now exploit specific geometric properties of the point process to derive a dual form. 
Given the definition of $M_i$, we may make a transformation of the CDF algebraically as follows:
\begin{align}
     P & (M_1 \leq x_1^{-1}, \ldots,  M_d \leq x_d^{-1} )= \\
     & P\left(\max_{k=1}^d Y_k^{(n)}x_k\le A^{(n)},\forall n \right)\notag :=P\left(V_x^{(n)}\le A^{(n)},\forall n\right).
\end{align}
In this case, we can reduce our analysis to this two dimensional space and consider the robustification of the point process on this space, rather than the full $d$ dimensional space of the $Y^{(n)}$'s. 
A schematic of this representation before robustification is given in Figure~\ref{fig:before_cdf}. Using this form leads to a dual form given by the following theorem:
\begin{theorem}[Formulation for CDF]
\label{thm:dro_cdf}
Consider the setting in Theorem~\ref{thm:dro_pp} and the loss function $\ell(X) = \mathds{1}_{X\ge 1}$, the point process $N(\cdot):=\sum_{n=1}^{\infty}\mathds{1}_{(A^{(n)},V^{(n)}_x)}(\cdot)$, the function $f(a,v)=\mathds{1}_{\mathcal{C}}(a,v)$ and $\mathcal{C}=\{(a,v)\in\mathbb{R}^2 \mid a\ge0,a\le v\}$. Then, the dual objective is:
\begin{align}
&\inf_{W_{c}(P, P_{0})\le\delta}P'(M_1 \leq x_1^{-1}, \ldots,  M_d \leq x_d^{-1} )\notag\\
=&1-\inf_{\lambda \geq 0} \lambda \delta + \mathbb{E}_{P_0} \left [1 - \lambda \min_{n\geq 1} \left( A^{(n)}- V_x^{(n)} \right)^+ \right ].
\label{eq:dro_cdf}
\end{align}
\end{theorem}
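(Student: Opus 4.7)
The strategy is to apply Theorem~\ref{thm:dro_pp} after casting the CDF in complementary form, and then to evaluate the resulting inner adversarial supremum explicitly using the geometry of $\mathcal{C}$ together with the infinite-cost structure in~\eqref{eq:cost}.

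I would first rewrite the event inside the probability purely in terms of atoms of the Poisson point process $N$. Using the max-stable decomposition~\eqref{eq:max_stable}, $M_k = \max_n Y_k^{(n)}/A^{(n)}$, so the joint event $\{M_k \le x_k^{-1},\ k=1,\ldots,d\}$ is equivalent to $\max_k Y_k^{(n)} x_k \le A^{(n)}$ for every $n$, i.e., to $V_x^{(n)} \le A^{(n)}$ for every $n$. Since $\mathcal{C}=\{(a,v): a\ge 0,\ a\le v\}$, this is exactly $\{N(\mathcal{C})=0\}$. With $f=\mathds{1}_{\mathcal{C}}$ and $\ell(t)=\mathds{1}_{t\ge 1}$, we have $N'(f)=N'(\mathcal{C})$ and $\ell(N'(f))$ is the indicator of the complementary event. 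Therefore
\begin{align*}
\inf_{W_c(P,P_0)\le\delta} P'(M_1\le x_1^{-1},\ldots,M_d\le x_d^{-1}) = 1 - \sup_{W_c(P,P_0)\le\delta} \mathbb{E}_{P'}[\ell(N'(f))],
\end{align*}
and Theorem~\ref{thm:dro_pp} converts the right-hand supremum into its dual.

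The heart of the proof is evaluating $\sup_{N'}\{\ell(N'(f)) - \lambda \widetilde{c}(N,N')\}$ pointwise in $N$. The cost in~\eqref{eq:cost} assigns infinite penalty to any perturbation of an arrival time, so after the optimal matching the admissible adversaries keep each $A^{(n)}$ fixed and only move the spectral coordinates, incurring total cost $\sum_n |V_x^{(n)}-V_x'^{(n)}|$. Because $\ell$ is merely the indicator that at least one atom lies in $\mathcal{C}$, the adversary's cheapest strategy to flip $\ell$ from $0$ to $1$ is to pick the single index $n^*$ minimizing the positive gap $A^{(n)}-V_x^{(n)}$ and push $V_x^{(n^*)}$ up to $A^{(n^*)}$; perturbing additional atoms only increases cost without changing the indicator. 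The resulting minimal perturbation cost is $\min_{n\ge 1}(A^{(n)}-V_x^{(n)})^+$, which is $0$ precisely when $N$ already contains an atom in $\mathcal{C}$. Consequently the inner supremum equals $1-\lambda\min_{n\ge 1}(A^{(n)}-V_x^{(n)})^+$, and substituting into the dual of Theorem~\ref{thm:dro_pp} and rearranging produces the claimed identity.

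The main obstacle I expect is justifying the one-atom reduction rigorously given infinitely many atoms of the underlying Poisson point process: one needs to argue that because $A^{(n)}\to\infty$ almost surely, only finitely many indices are candidates for the minimum and the infimum over matchings built into $\widetilde{c}$ is attained by the identity matching together with a single-atom move. A secondary subtlety is that the adversary can always decline to perturb, so strictly speaking the inner supremum is $(1-\lambda\min_n(A^{(n)}-V_x^{(n)})^+)^+$; this positive-part correction is absorbed by the eventual infimum over $\lambda$ and does not alter the claimed equality in the regime where the dual bound is a valid probability.
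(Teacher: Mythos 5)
Your proposal is correct and follows essentially the same route as the paper's proof: recasting the CDF event as $\{N(\mathcal{C})=0\}$, passing to the complement, applying the point-process duality of Theorem~\ref{thm:dro_pp}, and evaluating the inner supremum via the single-atom perturbation that moves the spectral coordinate of the index minimizing $A^{(n)}-V_x^{(n)}$, yielding cost $\min_{n\ge 1}(A^{(n)}-V_x^{(n)})^+$. Your two noted subtleties (attainment of the minimizing index because $A^{(n)}\to\infty$ a.s., and the outer positive part on $1-\lambda\min_n(\cdot)^+$, which the paper's appendix derivation does carry but its theorem statement drops) are genuine and handled appropriately.
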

\begin{proof}
See Appendix~\ref{sec:proof_cdf}.
\end{proof}
\begin{figure}[h!]
% \label{fig:before_cdf} 
\centering
\begin{subfigure}[b]{0.23\textwidth}
\includegraphics[width=\textwidth]{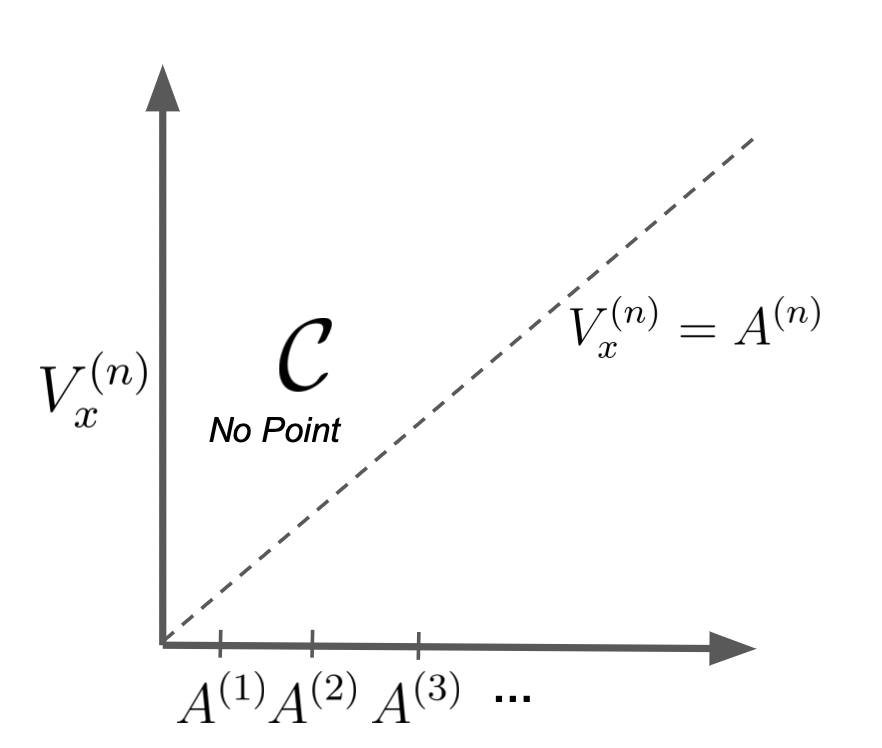}
   \caption{Region of $\mathcal{C}$ before robustification.}
   \label{fig:before_cdf} 
\end{subfigure}
\begin{subfigure}[b]{0.23\textwidth}
\includegraphics[width=\textwidth]{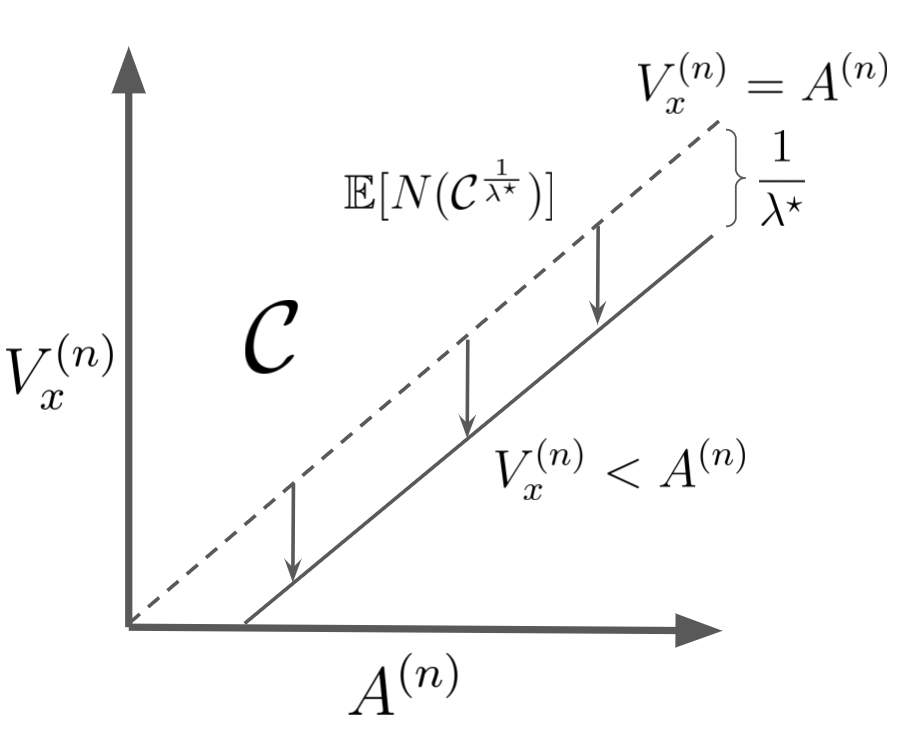}
   \caption{Shift of region of $\mathcal{C}$ to account for uncertainty.}
   \label{fig:cdf_shift}
\end{subfigure}
\caption{Two dimensional space for robustificaiton.}
\end{figure}
This formulation is advantageous in many ways. First, we reduced the original optimization problem over measures to one over a scalar variable $\lambda$. Second, this formulation includes the relevant constraints such that the perturbed distribution maintains the max-stability property of EVT. Third, the optimal solution is given by a shift in the rate function, as illustrated in the following corollary: 
\begin{corollary}[Minimizer of CDF]
\label{cor:cdf}
The minimizer of the problem in Theorem~\ref{thm:dro_cdf} has the following form:
\begin{align*}
&\inf_{P:W_c(P,P_0)\le\delta}P(X_1 \leq x_1^{-1}, \ldots, X_d \leq x_d^{-1}) \\ &= \exp \left ( - \mathbb{E}_{N\sim P_0} [N( \mathcal{C}_{1/\lambda^\star})] \right )
\end{align*}
where $\mathcal{C}_{1/\lambda^\star}$ is the region 
$
\left \{(a, v) : V_x^{(n)} > A^{(n)} - \frac{1}{\lambda^\star} \right \}
$
and $\lambda^\star$ is the minimizer of~\eqref{eq:dro_cdf}.
 \end{corollary}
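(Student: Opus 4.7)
The plan is to deduce the corollary from the dual in Theorem~\ref{thm:dro_cdf} via two ingredients: the void-probability formula for a Poisson point process, and the first-order optimality condition in $\lambda$. Let $Z := \min_{n\ge 1}(A^{(n)}-V_x^{(n)})^+$. The geometric key is that for any $u>0$, the event $\{Z>u\}$ coincides with the event that $N$ has no atom in the shifted region $\mathcal{C}_u=\{(a,v):v>a-u\}$; since $N\sim P_0$ is a Poisson point process, this yields $P_0(Z>u)=\exp(-\mathbb{E}_{P_0}[N(\mathcal{C}_u)])$, which plants the target expression directly into the analysis.

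I would then apply the layer-cake identity to $(1-\lambda Z)^+$ (the form actually produced by the inner supremum in the dual, before any clipping is dropped) and use the Poisson void probability to write
\begin{equation*}
\mathbb{E}_{P_0}[(1-\lambda Z)^+]\;=\;\lambda\int_0^{1/\lambda}P_0(Z<u)\,du\;=\;1-\lambda\int_0^{1/\lambda}\exp\bigl(-\mathbb{E}_{P_0}[N(\mathcal{C}_u)]\bigr)\,du,
\end{equation*}
so that the dual becomes an explicit convex function of $\lambda$. Differentiating in $\lambda$ (equivalently, computing $\mathbb{E}[Z\,\mathds{1}_{Z<1/\lambda}]$ by integration by parts and reusing the void identity) gives the stationarity condition
\begin{equation*}
\delta\;=\;\int_0^{1/\lambda^\star}\exp\bigl(-\mathbb{E}_{P_0}[N(\mathcal{C}_u)]\bigr)\,du\;-\;\tfrac{1}{\lambda^\star}\exp\bigl(-\mathbb{E}_{P_0}[N(\mathcal{C}_{1/\lambda^\star})]\bigr).
\end{equation*}
Substituting this relation back into the dual value, the $\lambda^\star\delta$ term exactly cancels $\lambda^\star$ times the integral, leaving $\sup\mathbb{E}[\ell]=1-\exp(-\mathbb{E}_{P_0}[N(\mathcal{C}_{1/\lambda^\star})])$. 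Inverting $\inf\mathrm{CDF}=1-\sup\mathbb{E}[\ell]$ then produces the claimed identity.

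The main obstacle is not the algebra but a pair of regularity issues. First, one must verify that $\lambda^\star$ is interior, i.e.\ $0<\lambda^\star<\infty$, so that the first-order condition genuinely characterizes the minimizer of the convex dual; this holds in the informative regime $0<\delta<\mathbb{E}_{P_0}[Z]$ and is what makes the cancellation in the final step valid. Second, one has to replace the bare $1-\lambda Z$ written in Theorem~\ref{thm:dro_cdf} by its positive part $(1-\lambda Z)^+$, which is the form actually produced by the inner supremum and is required for the layer-cake step to be well-defined. Both are standard bookkeeping adjustments rather than conceptually new steps.
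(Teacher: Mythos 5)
Your proof is correct and takes essentially the same route as the paper's: a first-order optimality condition in $\lambda$, substitution of that condition back into the dual to collapse its value to $1-P_0(Z>1/\lambda^\star)$, and the Poisson void probability identifying $\{Z>1/\lambda^\star\}$ with $\{N(\mathcal{C}_{1/\lambda^\star})=0\}$. Your layer-cake computation is just an equivalent repackaging (via integration by parts) of the paper's stationarity condition $\delta=\mathbb{E}_{P_0}\left[Z\,\mathds{1}\left(Z\le 1/\lambda^\star\right)\right]$, and your two regularity remarks are valid but minor --- in particular the outer positive part you insist on does appear in the paper's appendix proofs even though it is dropped from the displayed statement of Theorem~\ref{thm:dro_cdf}.
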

 \begin{proof}
     See Appendix~\ref{sec:cor_cdf}.
 \end{proof}

The ideas behind Corollary~\ref{cor:cdf} are illustrated in Figure~\ref{fig:cdf_shift} where the region over which the expectation is taken is shifted by $1/\lambda^\star$.

 \subsection{Probability an Event in a Rare Set Occurs}
In this part, we describe a generalization of the CDF case where we are interested in robustifying the probability that an event in the set $A \subset \mathbb{R}^d_+$ occurs, i.e. $P(N(A) \geq 1)$. Our primal problem in this case is:
 $$
 \sup_{W_c(P, P_0) \leq \delta} P(N(A) \geq 1) = \sup_{W_c(P, P_0) \leq \delta} \mathbb{E}_{X \sim P}[\mathds{1}_{X \in A}]
 $$
 and we let the distance be any $\ell_p$-norm.
 We can write the dual form and obtain the following simplification:
 \begin{theorem}[Formulation for Rare Set]
    Consider the setting in Theorem~\ref{thm:dro_pp} and the loss function $\ell(X)=\mathds{1}_{X\ge1}$, the point process $N(\cdot)=\sum_{n=1}^\infty\mathds{1}_{X^{(n)}}(\cdot)$, and the function $f(x)=\mathds{1}_{A}(x)$. Then the dual objective is:
    \begin{align}
    &\sup_{W_c(P, P_0) \leq \delta} P(N(A) \geq 1) \notag\\
    =&
    \inf_{\lambda \geq 0}  \left \{ \lambda \delta +  \mathbb{E}_{P_0} \left ( 1- \lambda \inf_{n} \inf_{y \in A}\left \| y - X^{(n)}\right\|\right)^+ \right \} .
    \end{align}
    %{\color{red} where is minimizing over $\lambda$}
 \end{theorem}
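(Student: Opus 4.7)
The plan is to specialize Theorem~\ref{thm:dro_pp} to this setting and then reduce the inner supremum over adversarial counting measures $N'$ to a closed-form expression. Substituting $\ell(X) = \mathds{1}_{X \ge 1}$ and $f(x)=\mathds{1}_A(x)$ into the duality (and using $N'(f)=N'(A)$) gives
\begin{align*}
\sup_{W_c(P,P_0)\le\delta} P(N(A)\ge 1)
= \inf_{\lambda\ge 0}\Bigl\{\lambda\delta + \mathbb{E}_{N\sim P_0}\bigl[\,S(N;\lambda)\,\bigr]\Bigr\},
\end{align*}
where $S(N;\lambda):=\sup_{N'}\bigl(\mathds{1}_{N'(A)\ge 1} - \lambda\widetilde c(N,N')\bigr)$. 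It therefore suffices to evaluate $S(N;\lambda)$ pointwise for each realisation $N=\sum_n \mathds{1}_{X^{(n)}}$.

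To do so, I would split on whether $N'(A)\ge 1$ or $N'(A)=0$. In the second case, the indicator contributes $0$ and the adversary's best reply is $N'=N$, for value $0$. In the first case, the definition of $\widetilde c$ forces $N'(S)=N(S)$ (otherwise $\widetilde c=\infty$) and the cost decomposes additively along the optimal matching $\sigma$, so the cheapest way to place at least one atom of $N'$ in $A$ is to keep all but one atom of $N$ fixed and transport a single atom $X^{(n)}$ to some $y\in A$, at cost $\|y-X^{(n)}\|$. Minimising over the choice of atom and target yields
\begin{align*}
\sup_{N'\,:\,N'(A)\ge 1}\!\bigl(1-\lambda\widetilde c(N,N')\bigr)
=1 - \lambda \inf_{n}\inf_{y\in A}\|y - X^{(n)}\|.
\end{align*}
Taking the larger of the two cases gives $S(N;\lambda)=\bigl(1-\lambda\inf_n\inf_{y\in A}\|y-X^{(n)}\|\bigr)^+$. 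This is automatically consistent with the trivial case $N(A)\ge 1$, since then the inner infimum vanishes (take $y=X^{(n)}$ for the atom already in $A$) and $S(N;\lambda)=1$. Substituting back produces the claimed formula.

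The main obstacle is justifying that a \emph{single}-atom perturbation is optimal among all $N'$ with $N'(A)\ge 1$. Because the loss saturates once one atom lies in $A$, any additional transport strictly increases $\lambda\widetilde c(N,N')$ without improving the indicator, so one-atom moves dominate; the equi-cardinality constraint $N(S)=N'(S)$ is preserved by such a move and the remaining atoms can be left untransported at zero cost under the matching $\sigma$ that sends each unperturbed atom to itself. A secondary point is measurability of $N\mapsto \inf_n\inf_{y\in A}\|y-X^{(n)}\|$ under $P_0$, which follows from lower semi-continuity of $c=\|\cdot\|$ and Borel measurability of $A$; this justifies Fubini and the manipulation of expectation and infimum, completing the proof.
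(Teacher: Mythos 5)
Your proposal is correct and follows essentially the same route as the paper's proof: specialize the point-process duality, split the inner supremum on whether the adversarial measure places an atom in $A$, and observe that since the indicator loss saturates, the optimal perturbation transports a single atom of $N$ to the nearest point of $A$, yielding the positive-part expression $\left(1-\lambda\inf_{n}\inf_{y\in A}\|y-X^{(n)}\|\right)^{+}$. Your explicit justification that one-atom moves dominate (and that the remaining atoms are matched to themselves at zero cost) is the same observation the paper makes, just stated a bit more carefully.
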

  \begin{proof}
 See Appendix~\ref{sec:proof_region}.
 \end{proof}

 Relatedly, we can also compute the expected number of events to occur within a set using the following theorem:
 \begin{theorem}[Expected number of events in a set]
     Consider the setting in Theorem~\ref{thm:dro_pp} and the loss function $\ell(X)=X$, the point process $N(\cdot)=\sum_{n=1}^{\infty}\mathds{1}_{X^{(n)}}(\cdot)$, and the function $f(x)=\mathds{1}_{A}(x)$. The the dual objective is:
     \begin{align}
         &\sup_{W_c(P, P_0) \leq \delta} \mathbb{E}_{N \sim P}[N(A)]\notag\\
         =&     \min_{\lambda \geq 0} \left [ \lambda \delta + \mathbb{E}_{P_0} \sum_{n=1}^{\infty}\left ( 1 - \lambda\inf_{y \in A} \left \| y - X^{(n)} \right \| \right)^+ \right ].
     \end{align}
 \end{theorem}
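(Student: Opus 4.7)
The plan is to apply Theorem~\ref{thm:dro_pp} directly with the choices $\ell(X) = X$ and $f(x) = \mathds{1}_A(x)$, then reduce the inner supremum over point processes $N'$ to a separable scalar optimization at each atom of $N$. First, with these choices $N'(f) = \sum_n \mathds{1}_A(Y^{(n)}) = N'(A)$, so Theorem~\ref{thm:dro_pp} yields
\begin{equation*}
\sup_{W_c(P,P_0)\le\delta}\mathbb{E}_{N'\sim P}[N'(A)] = \inf_{\lambda\ge 0}\left\{\lambda\delta + \mathbb{E}_{N\sim P_0}\left[\sup_{N'}\bigl(N'(A)-\lambda\widetilde{c}(N,N')\bigr)\right]\right\}.
\end{equation*}
So everything reduces to evaluating the inner supremum for a fixed realization $N = \sum_{n} \mathds{1}_{X^{(n)}}$.

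Next I would exploit the structure of $\widetilde{c}$. Because $\widetilde{c}$ is infinite unless $N(S)=N'(S)$, the adversary must choose a point process with the same (possibly infinite) number of atoms, say $N' = \sum_n \mathds{1}_{Y^{(n)}}$, and the cost is $\inf_\sigma \sum_n \|X^{(n)} - Y^{(\sigma(n))}\|$ (taking the scaling $\kappa\equiv 1$, matching earlier formulations). Since the objective $N'(A) = \sum_n \mathds{1}_A(Y^{(n)})$ depends on $N'$ only as a multiset, one can reparameterize $Z^{(n)} := Y^{(\sigma(n))}$, which absorbs the permutation and leaves
\begin{equation*}
\sup_{N'}\bigl[N'(A)-\lambda\widetilde{c}(N,N')\bigr] = \sup_{(Z^{(n)})}\sum_{n=1}^\infty\Bigl[\mathds{1}_A(Z^{(n)}) - \lambda\|X^{(n)}-Z^{(n)}\|\Bigr] = \sum_{n=1}^\infty \sup_{Z^{(n)}}\Bigl[\mathds{1}_A(Z^{(n)}) - \lambda\|X^{(n)}-Z^{(n)}\|\Bigr].
\end{equation*}
Each scalar subproblem is solved by comparing two cases: moving $Z^{(n)}$ into $A$ (optimal value $1 - \lambda\inf_{y\in A}\|y-X^{(n)}\|$, achieved by projecting $X^{(n)}$ onto $A$) or keeping $Z^{(n)}=X^{(n)}$ (value $0$). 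Taking the maximum yields $\bigl(1 - \lambda\inf_{y\in A}\|y-X^{(n)}\|\bigr)^+$, which also covers the edge case $X^{(n)}\in A$ since then the infimum is $0$. Substituting this back gives exactly the claimed dual objective.

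The main obstacle I expect is the rigorous handling of the decoupling in the infinite-atom setting: one must justify interchanging $\sup$ with the infinite sum (e.g. by Fubini/Tonelli, since the summands are non-negative after taking the positive part) and show that the permutation infimum can be absorbed into the reparameterization when $N(S)=\infty$. A secondary technical point is the attainment of $\inf_{y\in A}\|y-X^{(n)}\|$, which may require assuming $A$ is closed or replacing the minimum by an infimum in the decoupled optimization (the sup is still attained in the limit). Modulo these measurability and attainment issues, the identity is a straightforward specialization of Theorem~\ref{thm:dro_pp} via pointwise optimization, entirely analogous to the rare-set probability case proved just above.
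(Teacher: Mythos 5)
Your proposal is correct and follows essentially the same route as the paper: both apply the point-process duality, decouple the inner supremum across the atoms of $N$, and solve each per-atom subproblem by comparing the value of projecting $Z^{(n)}$ into $A$ against leaving it at $X^{(n)}$, yielding $\bigl(1-\lambda\inf_{y\in A}\|y-X^{(n)}\|\bigr)^+$. Your additional remarks on absorbing the permutation and interchanging the supremum with the infinite sum make explicit steps the paper leaves implicit, but do not change the argument.
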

 \subsection{Conditional Value at Risk}
Conditional value at risk (CVaR) is a commonly used risk measure in finance. Often there is uncertainty around the calculation, so we derive an efficient form for the calculation. In the DRO case, we can write the CVaR problem with the given level $\alpha$ as:
 \begin{align}
     \sup_{P: W_c(P,P_0)\le\delta}\inf_z\mathbb{E}_P\left(\frac{(\int\bigvee_{k=1}^d x_k N(dx)-z)^{+}}{1-\alpha}+z\right).
 \end{align}
 By exchanging the $\sup-\inf$ to $\inf-\sup$, we can write the dual formulation as the following theorem:

 \begin{theorem}[Formulation for CVaR]
      Consider the setting in Theorem~\ref{thm:dro_pp} and the loss function $\ell(X)=\frac{(X-z)^{+}}{1-\alpha}+z$, the point process $N(\cdot)=\sum_{n=1}^{\infty}\mathds{1}_{X^{(n)}}(\cdot)$, and the function $f(x)=\|x\|_{\infty}$. The the dual objective is:
    \begin{align}
        &\inf_z\sup_{P: W_c(P,P_0)\le\delta}\mathbb{E}_P\left(\frac{(\int\bigvee_{k = 1}^d x_i N(dx)-z)^{+}}{1-\alpha}+z\right)\notag\\
         =&\frac{\delta}{1-\alpha} + \mathbb{E}_{P_0} \left [ \sum_{n=1}^\infty \left \|X^{(n)}\right \|_\infty\,\left|\, \sum_{n=1}^\infty \|X^{(n)}\|_\infty > q_\alpha \right.\right ],
    \end{align}
    where $P_0\left(\sum_{n=1}^\infty \|X^{(n)}\|_\infty\le q_{\alpha}   \right)=\alpha$ and $q_1<\infty$.
 \end{theorem}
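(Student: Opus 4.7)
The plan is to apply the point-process Wasserstein duality of Theorem~\ref{thm:dro_pp} to the inner supremum, reduce the per-realization inner problem to a one-dimensional calculation using the $1$-Lipschitz property of $f(x)=\|x\|_\infty$, and then invoke the classical Rockafellar--Uryasev identity for the outer infimum over $z$.

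For fixed $z$, Theorem~\ref{thm:dro_pp} with loss $\ell(y) = (y-z)^+/(1-\alpha) + z$ and $f(x) = \|x\|_\infty$ yields
\begin{align*}
\sup_{W_c(P,P_0)\le\delta} \mathbb{E}_P \ell(N(f)) = \inf_{\lambda\ge 0} \Bigl\{ \lambda\delta + \mathbb{E}_{N\sim P_0}\bigl[\sup_{N'} \bigl(\ell(N'(f)) - \lambda\widetilde{c}(N,N')\bigr)\bigr]\Bigr\}.
\end{align*}
Fix a realization $N=\sum_n \mathds{1}_{X^{(n)}}$ and set $S := N(f)=\sum_n \|X^{(n)}\|_\infty$. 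Any $N'=\sum_n \mathds{1}_{Y^{(n)}}$ with finite cost must share cardinality with $N$, and the reverse triangle inequality gives $S' - S \le \sum_n \|Y^{(n)}-X^{(n)}\|_\infty \le \widetilde{c}(N,N')$, where $S' := N'(f)$. Equality is attained by translating each $X^{(n)}$ outward along its argmax coordinate. Parametrising by $\Delta := S' - S \ge 0$ then reduces the inner sup to
\begin{align*}
\sup_{\Delta \ge 0}\left[\frac{(S + \Delta - z)^+}{1-\alpha} + z - \lambda \Delta\right],
\end{align*}
which equals $+\infty$ whenever $\lambda < 1/(1-\alpha)$ and equals $(S-z)^+/(1-\alpha) + z$ (attained at $\Delta=0$) once $\lambda \ge 1/(1-\alpha)$. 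Consequently the outer infimum over $\lambda$ is attained at $\lambda^\star = 1/(1-\alpha)$, contributing the constant $\delta/(1-\alpha)$, and the remaining quantity is $\mathbb{E}_{P_0}[(S-z)^+/(1-\alpha) + z]$.

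Taking the outer $\inf_z$, the classical Rockafellar--Uryasev identity gives $\inf_z \mathbb{E}_{P_0}[(S-z)^+/(1-\alpha) + z] = \mathbb{E}_{P_0}[S \mid S > q_\alpha]$, achieved at the $\alpha$-quantile $z = q_\alpha$; the hypothesis $q_1 < \infty$ ensures the quantile lies in the essential support and the conditional expectation is finite. Combining the two contributions recovers the claimed identity. The main obstacle I anticipate is the infinite-cardinality bookkeeping: a Poisson point process on an unbounded domain can have $N(\mathcal{S})=\infty$, so the atom-matching used to achieve $S' - S = \widetilde{c}(N, N')$ must be justified either via the scaling factor $\kappa$ in the cost (penalizing unbounded configurations) or by truncating to bounded subregions and passing to the limit. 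A secondary check is verifying that the $+\infty$ branch at $\lambda < 1/(1-\alpha)$ does not obstruct the outer $\inf_\lambda$, which is automatic given $\delta > 0$.
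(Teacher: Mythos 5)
Your proposal is correct and follows essentially the same route as the paper's proof: apply the point-process duality, reduce the inner supremum via the triangle inequality for $\|\cdot\|_\infty$ to a threshold condition on $\lambda$ (the paper isolates this as Lemma~\ref{lem:cvar}, factoring out $1/(1-\alpha)$ so its critical value is $\lambda=1$ rather than your equivalent $1/(1-\alpha)$), invoke Rockafellar--Uryasev for the outer infimum over $z$, and handle the infinite-cardinality issue by truncating to $N$ points and passing to the limit using $q_1<\infty$. Your parametrization by the aggregate shift $\Delta = S'-S$ is just a repackaging of the paper's pointwise argument $y_n = x_n$.
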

 \begin{proof}
 See Appendix~\ref{sec:proof_cvar}.
 \end{proof}
Having introduced these simplifications, we see that the optimization problems for commonly used losses under the point process interpretation of EVT can be made feasible.

\section{Experiments and Results}
\label{sec:experiments}
We now consider empirical validation of the proposed optimization schemes using two experiments. The first uses a synthetic data set constructed to pathologically challenge our model and represent real world risk scenarios. 
The second validation is a baseline comparison using a real data set of financial returns similar to the DRO MEV experiment proposed by \cite{yuen2020distributionally}. 
For the synthetic and real data experiments the observations are extreme; however, the standard EVD model is shown to not provide appropriate coverage for computing the worst-case risks. 
For both data sets, we provide empirical results validating that the proposed method satisfies the properties that we desired and outlined in the introduction, i.e. those of sufficient coverage while not being too conservative.

\subsection{Computational Implementations}
For losses that have a reduced dual problem such as those in Section~\ref{sec:closed_form}, we use the empirical data to represent the expectations and optimize over $\lambda$.
However, for general losses we use the equivalence stated in Corollary ~\ref{cor:trans_unit} to estimate the adversarial distribution.
Our procedure follows the method in~\citet{hasan2022modeling} where we first estimate the spectral measure from the observations using a generative neural network and then optimize over the space of networks within some ball of the initial distribution.
The case for arbitrary $\ell$ is illustrated in Algorithm~\ref{alg:robust}.

\begin{algorithm}
    \caption{Procedure for estimating adversarial loss from data.}
    \begin{algorithmic}[1]
    \Require{Fit $P_0$ according to MEV procedure in~\citet{hasan2022modeling}, adversarial budget $\delta$, risk function $\ell(\cdot)$, training iterations $K$.}
    \State \textbf{Initialize:} $P_\theta$, the adversarial distribution as $P_0$
    \For{$k=1\ldots K$}    
    \State \textbf{Sample:} $Z \sim P_\theta$, $x \sim P_0$
    \State \textbf{Compute:} $ \mathcal{L}_\lambda (Z,x) = \ell(Z) - \lambda c(Z, x)$
    \State \textbf{Solve:} $\max_{\theta} \mathcal{L}_\lambda(Z, x)$
    \State {\bf Solve:} $\min_{\lambda \geq 0} \lambda \delta + \mathbb{E}_{x \sim P_0}\mathcal{L}_\lambda(Z, x)$
    \EndFor
    \State \textbf{Return:} Adversarial risk $\lambda^\star \delta + \mathbb{E}\left [\max_{Z}{\left [\ell(Z) - \lambda c(Z, x)\right ]}\right ]$ and adversarial MEV $P_{\theta}$.
    \end{algorithmic}
    \label{alg:robust}
\end{algorithm}

\subsection{Experiments with Synthetic EVT Data}

We evaluate our proposed methodology on two synthetic two-dimensional EVD data sets.
 
We hypothesize that a properly formulated DRO approach, where max-stability is enforced via constraints, demonstrates a greater invariance to increased model misspecification relative to a non-constrained models. Our methodology is designed to account for the complexities of real data and produce more accurate and reliable predictions. 
Details concerning computational parameters, runtime, and code base reference is available in Appendix~\ref{sec:comp_description}.

\subsubsection{Synthetic Datasets}

For our synthetic data experiments, we consider mixture distributions where one mixture component with a more concentrated risk appears less frequently than the other mixture component. This is illustrated in Figures~\ref{fig:Ng1} and ~\ref{fig:Ng2}, where the ten thousand observations of the two datasets are visualized. In Figure~\ref{fig:Ng1} the component has a smaller tail index and more realizations with smaller magnitude whereas in Figure~\ref{fig:Ng2} the component has a larger tail index, but is more rare.  
For more information see Appendix~\ref{sec:data_description}.

A standard EVD approach would be sufficient when modeling a non-mixture Symmetric Logistic (SL) or Asymmetric Logistic (ASL) set of data. However, this mixture inserts a pathological modeling issue, which emulates the practical concerns often observed in real data where extrapolating to out-of-sample extreme values is difficult given the scarcity of data in these scenarios. 
These generated mixture distributions exhibit different tail behavior, which make it difficult for standard approaches to capture the dependencies accurately, and thus we turn to the proposed DRO estimator and evaluate our ability to recover the true risk.
\begin{figure}[!htbp]
\label{fig:slasl} 
\centering
\begin{subfigure}[b]{0.23\textwidth}
\includegraphics[width=\textwidth]{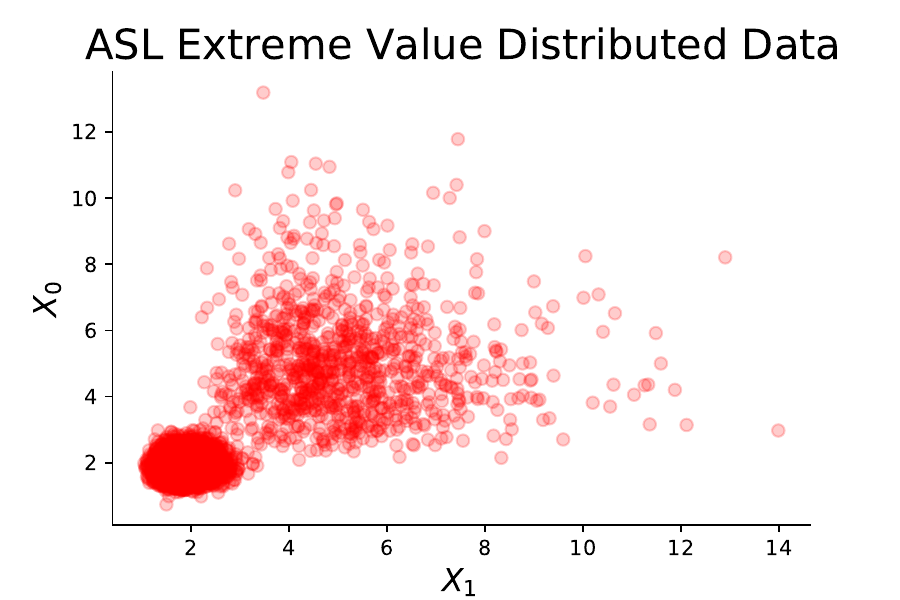}
   \caption{ASL data: lower dependence}
   \label{fig:Ng1} 
\end{subfigure}
\begin{subfigure}[b]{0.23\textwidth}
   \includegraphics[width=\textwidth]{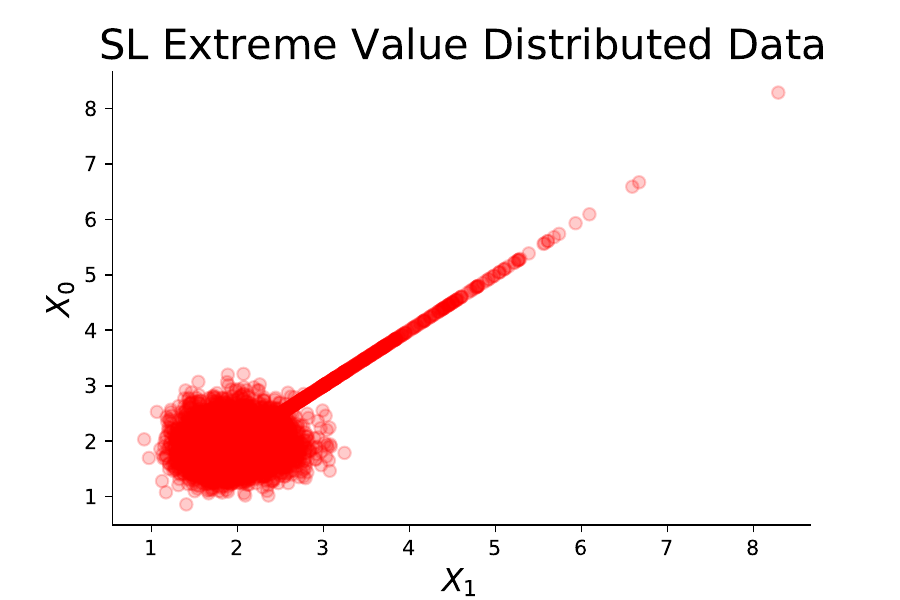}
   \caption{SL data: higher dependence}
   \label{fig:Ng2}
\end{subfigure}
\caption{Visualization of Synthetic datasets. }
% YN: figure caption does not match text above
\end{figure}

\begin{figure}
\centering
\begin{subfigure}[b]{0.23\textwidth}
\includegraphics[width=1\textwidth]{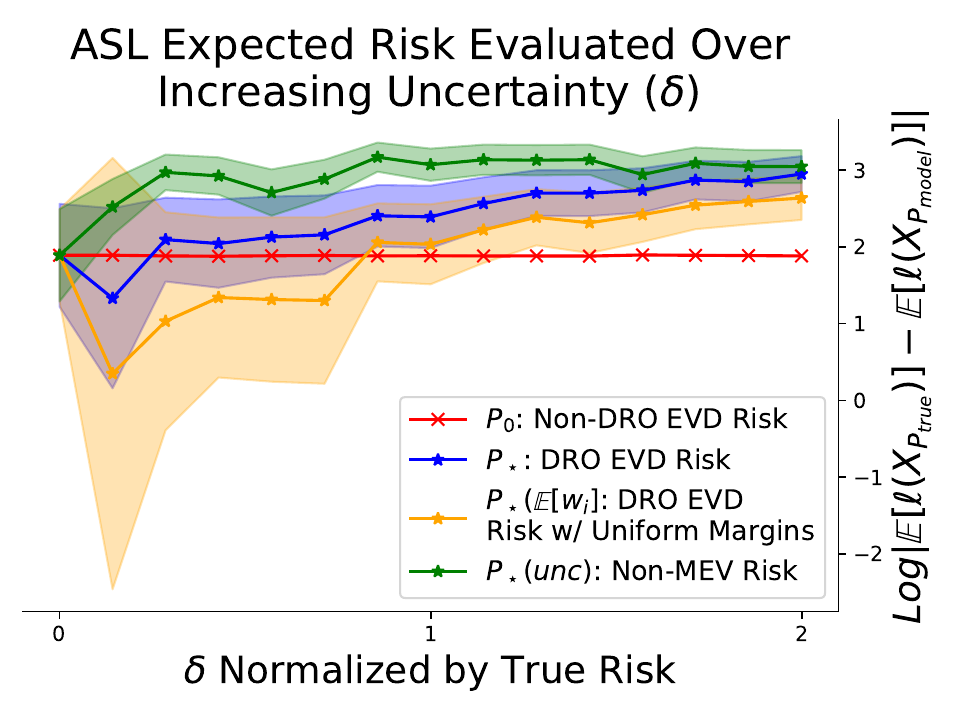}
   \caption{$\mathrm{CVaR}$ calculation using lower dependence ASL data.}
   \label{fig:asl_dro} 
\end{subfigure}
\begin{subfigure}[b]{0.23\textwidth}
\includegraphics[width=1\textwidth]{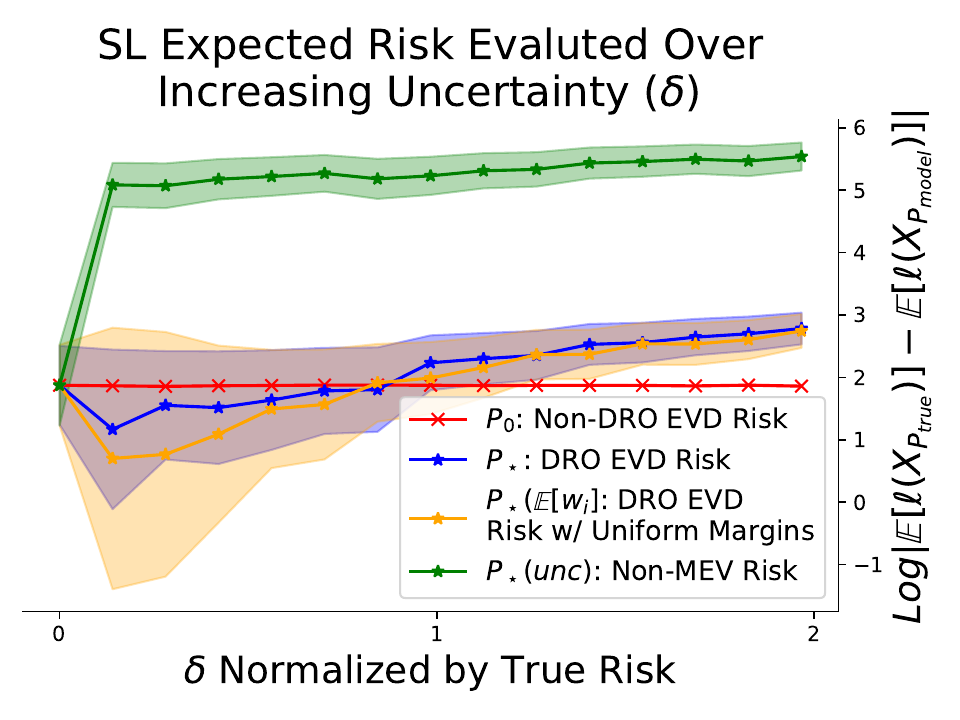}
   \caption{$\mathrm{CVaR}$ calculation using higher dependence SL data.}
   \label{fig:sl_dro}
\end{subfigure}
\caption{DRO Estimators for MEV Distributions.}
\end{figure}
\subsubsection{Synthetic Experiment Results}
\label{sec:loss_prof}
% {\color{blue}{function of uncertainty budget}}
We now illustrate the results of the computational method for the loss corresponding to CVaR of the $\ell^1$-norm of the observations.
Specifically, we aim to robustify the calculation of $\frac1\alpha \mathbb{E}\left[\|X\|_1 \mathds{1}_{\|X\|_1 \leq x_\alpha}\right]$ where $x_\alpha = \inf\{ x : P(\|X\|_1 \leq x) \geq \alpha \}$.
As $\alpha \to 0$, this corresponds to the CVaR for values deep in the tail.
We approximate this region as a MEV under both the SL and ASL distributions described previously. 
Our goal in this experiment is to understand the behavior of the error, defined as the difference between the true risk and the risk computed using the different methods, as a function of the budget $\delta$.
We compare 3 different robustifications: (1) a totally unconstrained robustification where $P$ can be in any class of distributions; (2) an EVT constrained case where $P$ must also be an MEV and shares the same $A^{(n)}$'s; (3) an EVT constrained case where the margins are assumed to be well calibrated and only the dependence is modified (such that $w \sim P, \: s.t. \; \mathbb{E}[w_i] = 1/d$).
In terms of standard EVT methods, (3) corresponds to the case where the adversarial distribution has a different dependence structure but maintains the same margins.
Specifically, this would suggest estimating the margins is easier and therefore should be conserved, but the dependence structure should be perturbed.
Figures~\ref{fig:asl_dro} and~\ref{fig:sl_dro} illustrates the error of using a DRO approach with max-stability constraints compared to the unconstrained approach, producing a distribution which is not of the MEV model class as functions of $\delta$. 
This allows for the understanding of performance based on the behavior of the loss function as $\delta$ changes. 

\paragraph{Synthetic Experiment Results Analysis}
In Figures \ref{fig:asl_dro} and \ref{fig:sl_dro}, we visualize that the models which are constrained to be max-stable, specifically $P_{\star}$ and $P_{\star (\mathbb{E}[w_i] \approx \frac{1}{d})}$, demonstrate a lower and robust (positive) error when compared to the unconstrained model $P_{\star (unc)}$. Additionally, we observe a slight improvement in the error of the constrained $P_{\star}$ model for an increase of model uncertainty ($\delta$) and subsequent monotonically increasing error as  uncertainty increases. This behavior confirms our conceptual illustration of the DRO max-stable constrained approach in Figure~\ref{fig:fig1}.

\subsubsection{Convergence of CDF as Distance is Increased}
We now consider an experiment that corresponds to Theorem~\ref{thm:dro_cdf} described in Section~\ref{sec:CDF}.
We illustrate the behavior in Figure~\ref{fig:cdf_exp} where we see an improvement in the error for a wide range of values of $\delta$. 
\begin{figure}[!htbp]
    \centering
\includegraphics[width=0.30\textwidth]{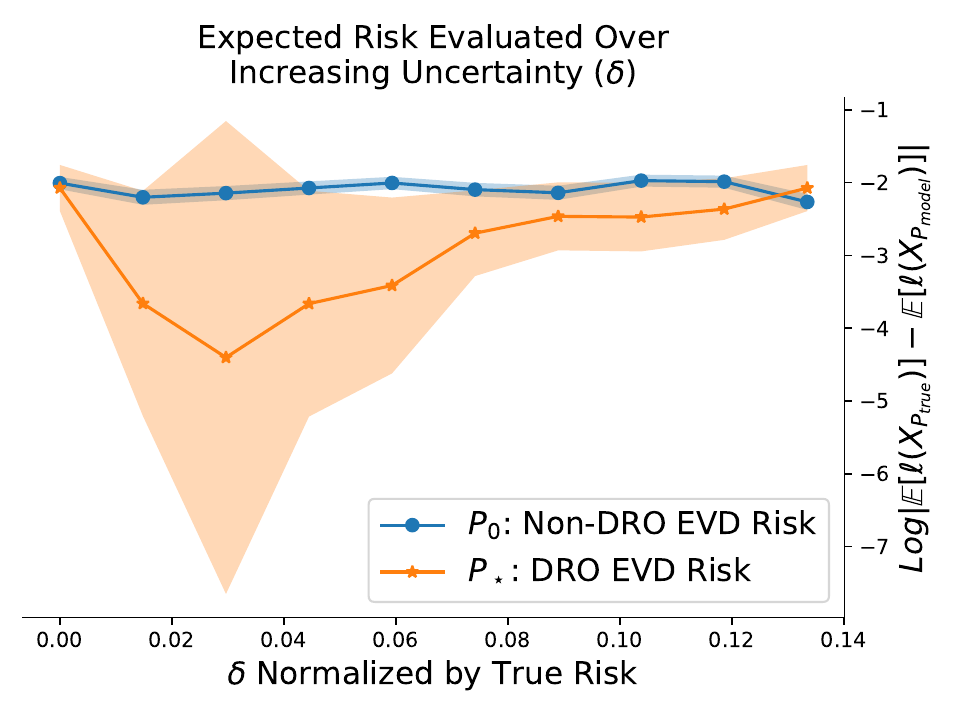}
    \caption{Comparison of adversarial formulations using $A^{(n)}$ and $V^{(n)}$ parameterization of fixed ``true value" CDF. As $\delta$ is increased there is greater model uncertainty.}
    \label{fig:cdf_exp}
\end{figure}
\FloatBarrier

\subsection{Experiment on High Dimensional Financial Data}

We consider a dataset on financial returns of equities in the S\&P 500 index similar to the observations used in the validation experiments proposed by~\cite{yuen2020distributionally}. 
The dataset is composed of all current members of the S\&P 500 who have been members since at least January 1983 to provide forty years of equity price data. 
We then average returns across companies, where each company is uniquely assigned to one of eleven industrial sectors (industrial, health care, consumer staples, etc.), leading to an eleven dimensional dataset. 
Maximum daily returns are taken both annually and weekly across the forty year data set, where each of these annual and weekly maximum data sets now constitute extreme observations.

This dataset is in the dimension of the number of industries considered, in this case $d = 11$. 
A visualization of this extreme data for highest weekly and annual returns is provided in Appendix~\ref{sec:fin_data_description}. 
Obtaining a ground truth risk for real data is nearly impossible since we may never observe realizations deep in the tails, so validating against a ground truth risk is difficult.
To circumvent this, we consider training on less extreme data given by maxima over shorter time scales and use maxima over longer time scales as the ground truth risk.
Specifically, we fit the models using weekly maxima and test on yearly maxima.
The reason for this is that large events are more likely to occur over the longer time scale (the year) than over the day, leading to a more representative set of measurements deeper in the tails. 

We use the exact same methodology outlined in Section~\ref{sec:loss_prof} and illustrate the results of this experiment in  Figure~\ref{fig:return_baseline}.
\begin{figure}[!htbp]
    \centering
\includegraphics[width=0.35\textwidth]{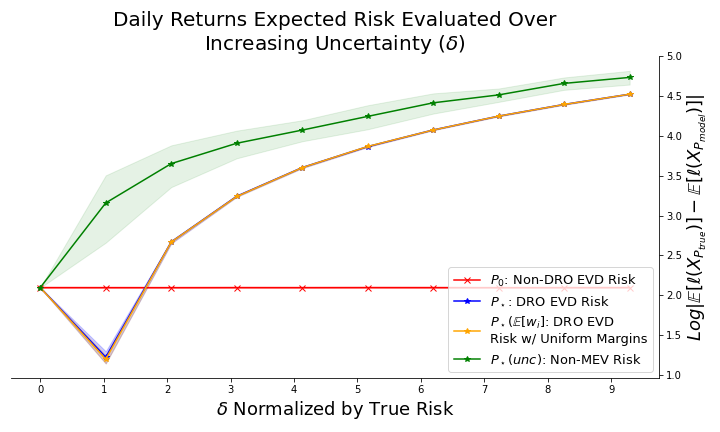}
    \caption{Comparison of different risk estimators on real financial data.}
\label{fig:return_baseline}
\end{figure}
\paragraph{Real Data Experiment Results Analysis}
We observe that when $\delta = $ true risk, a minimal error is achieved between the true and calculated risk.
Additionally, this experiment demonstrates that for a real extreme data set the models constrained by EVT achieve a lower error level when compared to the unconstrained model, as anticipated.
We note that the comparable formulation in~\citet{yuen2020distributionally} does not provide a modifiable parameter such as our formulation does with $\delta$, to adjust model uncertainty. 
This demonstrates the additional value of our proposed formulation over a comparable method. 

\section{Discussion}
In this paper, we introduced a framework for robustifying MEV distributions according to the point process viewpoint. 
We provided a novel framework and discussed simplifications that are possible under certain cost functions. 
We numerically demonstrated that, as we proposed, adding additional constraints from EVT improves the robustification without being too conservative. 
The framework is evaluated on a synthetic data set which is constructed to challenge the model and a high-dimensional real world data set derived from a similar investigation, demonstrating expected and improved behavior. 
The proposed methodology has a variety of applications in many fields, particularly in risk sensitive settings where model misspecification may be present. 

There are many avenues for extending the proposed methodology.
One important extension involves the case where one must specify a policy to mitigate a worst-case risk.
This applies in, for example, portfolio optimization problems where the optimal portfolio minimizes the conditional value at risk over a distribution of portfolio losses.
Additionally, further exploration of the proposed formulation leveraging max-stable processes extended to a more general max-infinitely division process may be insightful. 

\paragraph{Limitations}
There exist a number of limitations associated with the proposed framework. 
With regards to the computational method, there may be numerical instabilities associated with the optimization particularly in cases of $\delta$ being small. 
This is because the $\lambda$ parameter must grow large, which leads to unstable optimization schemes.

\begin{acknowledgements} 
Material in this paper is based upon work supported by the Air Force Office of Scientific Research under award number FA9550-20-1-0397.
\end{acknowledgements}

% References
\bibliography{uai2024-template}

\newpage

\onecolumn

\title{Distributionally Robust Optimization as a Scalable \\
           Framework to Characterize Extreme Value Distributions \\ (Supplementary Material)}
\maketitle

\appendix
\section{PROOFS}

\subsection{Proof of Equivalence in Representations}
\begin{proof}
Let us first recall the two problems that we are solving. % YN: write the problem here.
On one hand, we take an explicit point process view point in:
\begin{align}
\max_{ W_c(P, P_0) \leq \delta} \mathbb{E}_{N \sim P} \left[\ell(N(f))\right],
\tag{Point Process}
\label{eq:pp}
\end{align}
where $N(dx)=\sum_{n}\delta_{X^{(n)}}(dx)$ is a non-homogeneous Poisson process. By the properties of EVT, the intensity satisfies
$$
\mathbb{E}[N(f)] = \int f(T^{-1}(r,w)) r^{-2} \mathrm{d}r H(\mathrm{d}w).
$$
By change of variables, we could decompose $X^{(n)}=\frac{Y^{(n)}}{A^{(n)}}$, where $A^{(n)}$ is the $n$-th arrival of Poisson process with intensity 1 and $Y^{(n)}\sim H(\cdot)$. Thus, we could rewrite the problem with:
\begin{align}
    \max_{W_c(P, P_0)\le \delta}\mathbb{E}_{N'\sim P}\left[\ell(N'(f\circ T^{-1})\right],
\end{align}
where $N'(dt, dy)=\sum_{n}\mathds{1}_{(A^{(n)}, Y^{(n)})}(dt, dy)$ and $T(X)=(1/\|X\|_1, X/\|X\|_1)$.
\end{proof}
\subsection{CDF proof}

We now detail the proofs for specific loss functions, beginning with the cumulative distribution function (CDF), followed by the conditional value at risk (CVaR), probability, and expected number of events in rare sets. 

\label{sec:proof_cdf}
 \begin{proof}
 Define the upper triangular region as $\mathcal{U} = \{ (a, v) :  a \geq 0, v \geq a \}$.
 To compute the CDF, we want to condition on the Poisson process having no points within $\mathcal{U}$, i.e. $N(\mathcal{U}) = 0$. 
 We can compute the rate of the point process according to $\mathbb{E}[N(\mathcal{U})]$. 
 Since this is a Poisson point process, the probability that no points occur within $\mathcal{U}$ is given by
 $\exp(- \mathbb{E}[V_x])$.
 Having written the CDF in terms of the rate of the Poisson point process, we now follow through to describe the simplified form written in~\eqref{eq:dro_cdf}.
 First, we write the dual problem when directly applying~\citet{blanchet2019quantifying} as
 $$
 \min_{\lambda \geq 0} \left \{ \lambda \delta + \mathbb{E}_{N \sim P_0} \left [\max_{N'} \mathds{1}_{N'(\mathcal{U}) \geq 1} - \lambda \widetilde{c}(N, N') \right ] \right \}.
 $$
Conditioning directly on the first inequality, we can rewrite this as
 $$
 \min_{\lambda \geq 0} \left \{ \lambda \delta + \mathbb{E}_{N \sim P_0} \left [1- \lambda \min_{N'(\mathcal{U}) \geq 1} \widetilde{c}(N, N') \right ]^+ \right \}.
 $$
 {Here we also set $c((s,u), (v,t))=|u-v|\mathds{1}_{s=t}+\infty\mathds{1}_{s\not=t}$. Moreover, we notice the equivalent representation between $X^{(n)}$ and $(A^{(n)}, V_n(x))$, which means we can write $N(dx) = \sum_{n} \mathds{1}_{X^{(n)}}(dx)=\sum_{n}\mathds{1}_{(A^{(n)}, V_n(x))}(dt, dv)$. Thus, if $N(\mathcal{U})\ge 1$, it means $\exists n$ s.t. $V_n(x)-A_n\ge0 $, we can simply set $N'(\cdot)=N(\cdot)$ to obtain the minimum of $\min_{N'(\mathcal{U}) \geq 1} c(N, N')$. On the other hand, if $N(\mathcal{U})=0$, it means $\forall n$, $V_n(x)-A_n<0$. Then we denote $\delta=\min_n(A^{(n)}- V_n(x))$ and $n^*\in\arg\min_n(A^{(n)}- V_n(x))$ and set $N'(dt,dv)=\mathds{1}_{(A^{(n^*)}, V_{n^*}(x)+\delta)}(dt,dv)+\sum_{n\not=n^*}\mathds{1}_{(A^{(n)}, V_n(x))}(dt,dv)$, which guarantees $N'(\mathcal{U})\ge 1$, and the minimum of $\min_{N'(\mathcal{U}) \geq 1} c(N, N')$ is exactly $\delta$.  To conclude, we have the following equivalence holding:}
 $$
 \min_{N'(\mathcal{U}) \geq 1} c(N, N') = \min_{n\geq 1}\left [ A^{(n)} - V^{(n)}_x\right]^+.
 $$
Substituting, we obtain the desired result of~\eqref{eq:dro_cdf}:
 \begin{equation*}
 \min_{\lambda \geq 0} \lambda \delta + \mathbb{E}_{N\sim P_0} \left [1 - \lambda \min_{n\geq 1} \left( A^{(n)}- V_x^{(n)} \right)^+ \right ]^+
 \end{equation*}

\end{proof}

\subsubsection{Minimizer of CDF Proof}
\label{sec:cor_cdf}
 \begin{proof}
 Following where we ended in the proof of Theorem~\ref{thm:dro_cdf}, 
Denote $f(A,V_x):=\min_{n\geq 1} \left( A^{(n)}- V_x^{(n)} \right)^+$. By first order condition of the Equation~\eqref{eq:dro_cdf}, we have:
 \begin{align}
     \delta=\mathbb{E}_{N\sim P_0}\mathds{1}\left(f(A,V_x)\le\frac{1}{\lambda^*}\right)f(A,V_x).
 \end{align}
 Substituting it to objective, we have:
 \begin{align}
     &\min_{\lambda \geq 0} \lambda \delta + \mathbb{E}_{N\sim P_0} \left [1 - \lambda \min_{n\geq 1} \left( A^{(n)}- V_x^{(n)} \right)^+ \right ]^{+}\notag\\
     &=P_0\left(f(A,V_x)\le\frac{1}{\lambda^*}\right).
 \end{align}
 Thus, we have:
 \begin{align}
     &\min_{P:W_c(P,P_0)\le \delta}P(X_1 \leq x_1^{-1}, \ldots, X_d \leq x_d^{-1})\notag\\
     &=1-P_0\left(f(A,V_x)\le\frac{1}{\lambda^*}\right)\notag\\
     &=P_0\left(f(A,V_x)>\frac{1}{\lambda^*}\right)\notag\\
     &=P_0\left(N(\mathcal{C}_{1/\lambda^*})=0\right)\notag\\
     &=\exp\left(-\mathbb{E}_{P_0}N(\mathcal{C}_{1/\lambda^*})  \right)
 \end{align}
 \end{proof}

\subsection{Proof of Rare Region Probability}
 \label{sec:proof_region}

 \begin{proof}
     We write the primal problem as 
     $$
     \sup_{D(P, P_0) \leq \delta} P(N(A) \geq 1)
     $$
     and writing it in the dual form we get
     $$
     \inf_{\lambda \geq 0}  \left \{ \lambda \delta + \mathbb{E}_{P_0} \left [\sup_{Z^{(n)}} \left( \sup_{n=1}^\infty \mathds{1}_{Z^{(n)} \in A} - \lambda \sum_{n=1}^\infty \| Z^{(n)} - X^{(n)} \| \right ) \right ]\right \}.
     $$
     Replacing the inner maximization, we can write
     $$
     \sup_{Z^{(n)}} \left( \sup_{n} \mathds{1}_{Z^{(n)} \in A} - \lambda \sum_{n=1}^\infty \| Z^{(n)} - X^{(n)} \| \right ) = \left( 1- \lambda \inf_{n} \inf_{z \in A} \|z - X^{(n)} \|\right)^+,
     $$
     which is due to:
     \begin{align}
          \sup_{Z^{(n)}} \left( \sup_{n} \mathds{1}_{Z^{(n)} \in A} - \lambda \sum_{n=1}^\infty \| Z^{(n)} - X^{(n)} \| \right ) = \left( 1- \lambda \inf_{\exists n, Z^{(n)}\in A}\sum_{n=1}^\infty \|Z^{(n)} - X^{(n)} \|\right)^+.
     \end{align}
     And solving $\inf_{\exists n, Z^{(n)}\in A}\sum_{n=1}^\infty \|Z^{(n)} - X^{(n)} \|$, we have it equals $\inf_{n} \inf_{z \in A} \|z - X^{(n)} \|$ as the region only requires one sample $z^{(n)}\in A$ and the others can be equal to corresponding $X^{(n)}$.
 \end{proof}

 \subsection{Proof of Number of Rare Region Occurrences}
\begin{proof}
    We begin with the dual definition
    $$
    \inf_{\lambda \geq 0} \left \{ \lambda \delta + \mathbb{E}_{N \sim P_0}  \left [\sup_{ Z^{(1)}\cdots Z^{(N)}} \left ( \sum_{n=1}^\infty \mathds{1}_{Z^{(n)} \in A}  - \lambda \sum_{n=1}^\infty \| Z^{(n)} - X^{(n)} \| \right) \right ]\right \}.
    $$
    Rewriting the part within the $\sup$, we obtain the desired result, since
    $$
    \sup_{Z^{(1)}\cdots Z^{(N)}} \left ( \sum_{n=1}^\infty \mathds{1}_{Z^{(n)} \in A}  - \lambda \sum_{n=1}^\infty \| Z^{(n)} - X^{(n)} \| \right) = \sum_{n=1}^\infty \left( 1- \lambda \inf_{z \in A} \|z - X^{(n)}\|\right)^{ +}.
    $$
    
\end{proof}
\subsection{CVaR proof}
 \label{sec:proof_cvar}

 \begin{proof}

At first, we consider the finite point case, where $N(\cdot)=\sum_{n=1}^N \delta_{X^{(n)}}(\cdot)$. The CVaR problem is given as
 $$
 \min_z \underbrace{\max_{D(P, P_0) \leq \delta} \mathbb{E}_P \left( \frac{ \left ( \int\bigvee_{i=1}^{d} x_i N(dt, dx) - z\right )^+ }{1- \alpha}+ z\right )}_{\text{primal problem}}.
 $$
 Taking the primal problem component, we can write the dual problem as:
      \begin{align*}
     &\min_{\lambda \geq 0} \left \{ \lambda \delta + \mathbb{E}_{P_0} \left [ \max_{Z^{(n)}} \left( \sum_{n=1}^\infty \| Z^{(n)} \|_\infty - z \right)^+ - \lambda \sum_{n=1}^\infty \| Z^{(n)} - X^{(n)} \|_{\infty}\right ] \right \} \\
     &= \min_{\lambda \geq 1} \left \{ \lambda \delta + \mathbb{E}_{P_0} \left ( \sum_{n=1}^N \| X^{(n)} \|_{\infty} -z\right)^+  \right \} \quad {\text{ (Lemma~\ref{lem:cvar})}}\\
     &= \delta + \mathbb{E}_{P_0}\left ( \sum_{n=1}^N \| X^{(n)} \|_{\infty} -z\right)^+ .
     \end{align*}
     Adding the minimization over $z$, we obtain
     $$
     \min_{z} \left [ \frac{\delta + \mathbb{E}_{P_0} \left ( \sum_{n=1}^N \| X^{(n)} \|_{\infty} -z\right)^+}{1-\alpha} + z  \right ] .
     $$
Taking the minimum, we get
$$
\frac{\delta}{1-\alpha} + \mathbb{E}_{P_0} \left ( \sum_{n=1}^N \| X^{(n)} \|_\infty \mid \sum_{n=1}^N \|X^{(n)} \|_\infty  \geq q_{\alpha} \right )
$$
where $P_0\left ( \sum_{n=1}^N \| X^{(n)} \|_{\infty} \leq q_\alpha \right) = \alpha$. Finally, as $q_1<\infty$, the summation converges a.s., we can safely let $N\to\infty$ and obtain the final result.
 \end{proof}

    \begin{lemma}

    \label{lem:cvar}
        For $x\in\mathbb{R}^d_{+}$ and $\lambda\ge0$, we have:
        \begin{align}
            \max_{y_1\cdots y_N\in\mathbb{R}^d_{+}}\left(\sum_{n=1}^N \|y_n\|_{\infty}-z\right)^{+}-\lambda\sum_{n=1}^N\|y_n-x_n\|_{\infty}=
            \begin{cases}
                \infty, &\text{when $\lambda\in[0,1)$,}\\
                \left(\sum_{n=1}^N \|x_n\|_{\infty}-z\right)^{+},&\text{when $\lambda\in[1,\infty)$.}
            \end{cases}
        \end{align}
    \end{lemma}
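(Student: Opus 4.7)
\textbf{Proof plan for Lemma~\ref{lem:cvar}.}

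The plan is to split on the sign of the argument of $(\cdot)^+$ and then on the value of $\lambda$. Write the objective as $F(y_1,\ldots,y_N)=(S(y)-z)^+ - \lambda\,T(y,x)$, where $S(y):=\sum_{n=1}^N\|y_n\|_\infty$ and $T(y,x):=\sum_{n=1}^N\|y_n-x_n\|_\infty$. The key analytical fact, used throughout, is the reverse and forward triangle inequalities in the $\ell^\infty$ norm, which give
\begin{equation*}
\bigl|\|y_n\|_\infty-\|x_n\|_\infty\bigr|\le \|y_n-x_n\|_\infty,\qquad n=1,\dots,N.
\end{equation*}

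\textbf{Case $\lambda\ge 1$ (upper bound).} If $S(y)\le z$, then $F(y)=-\lambda T(y,x)\le 0\le (S(x)-z)^+$. Otherwise, $F(y)=S(y)-z-\lambda T(y,x)$, and by the triangle inequality above summed over $n$,
\begin{equation*}
S(y)-\lambda T(y,x)\le S(x) + (1-\lambda)T(y,x)\le S(x),
\end{equation*}
so $F(y)\le S(x)-z\le (S(x)-z)^+$. Thus in both sub-cases $F(y)\le (S(x)-z)^+$. Plugging $y=x$ in gives $F(x)=(S(x)-z)^+$, so the bound is attained.

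\textbf{Case $\lambda\in[0,1)$ (unboundedness).} Here I would construct an explicit divergent sequence. Fix a coordinate direction $e_1$ and for a parameter $t>0$ set $y_1=te_1$ and $y_n=x_n$ for $n\ge 2$. For $t$ large enough that $t\ge \max_j x_{1,j}$, one has $\|y_1\|_\infty=t$ and $\|y_1-x_1\|_\infty=t-x_{1,1}$, so
\begin{equation*}
F(y)=t+\sum_{n\ge 2}\|x_n\|_\infty - z-\lambda(t-x_{1,1})=(1-\lambda)t+C,
\end{equation*}
with $C$ a constant independent of $t$. Since $1-\lambda>0$, letting $t\to\infty$ shows $F(y)\to\infty$, establishing the claimed value $\infty$.

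\textbf{Anticipated obstacle.} The routine part is the upper bound; the only mildly subtle point is handling the $(\cdot)^+$ wrapper cleanly so that the triangle inequality reasoning works in both the positive and the clipped regime. The construction for $\lambda<1$ is straightforward once one notices that the gain in $S$ grows linearly in $t$ while the penalty $\lambda T$ grows only at rate $\lambda<1$; I would emphasize the role of $\mathbb{R}_+^d$ (which ensures $x_{1,1}\ge 0$ and allows $y_1=te_1$ to stay in the feasible set) to avoid sign issues when writing $\|te_1-x_1\|_\infty=t-x_{1,1}$ for large $t$.
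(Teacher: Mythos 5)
Your proof is correct and takes essentially the same route as the paper's: for $\lambda\ge 1$ you use the $\ell^\infty$ triangle inequality $\|y_n\|_\infty\le\|x_n\|_\infty+\|y_n-x_n\|_\infty$ to bound the objective by $\left(\sum_n\|x_n\|_\infty-z\right)^+$ (attained at $y=x$), and for $\lambda<1$ you send one block to infinity along a coordinate direction so that the gain grows at rate $1$ while the penalty grows at rate $\lambda<1$. The only quibble is that your threshold should be $t\ge x_{1,1}+\max_j x_{1,j}$ rather than $t\ge\max_j x_{1,j}$ to guarantee $\|te_1-x_1\|_\infty=t-x_{1,1}$, which is immaterial to the divergence.
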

    \begin{proof}
        For $\lambda\in[0,1)$, we claim the objective value would diverge to infinity. In fact, we let $y_n=x_n$ for $n\ge2$. The objective becomes:
        \begin{align}               \max_{y_1\in\mathbb{R}^d_{+}}\left(\|y_1\|_{\infty}+\sum_{n=2}^N\|x_n\|_{\infty}-z\right)^{+}-\lambda\|y_1\|_{\infty}.
        \end{align}
        Then, we can pick $y_1$ such that $\|y_1\|_{\infty}\ge z-\sum_{n=2}^N\|x_n\|_{\infty}$ and letting it goes infinity, we find the objective will also go infinity.

        For $\lambda\in[1,\infty)$, we have:
        \begin{align}
            \left(\sum_{n=1}^N \|y_n\|_{\infty}-z\right)^{+}-\lambda\sum_{n=1}^N\|y_n-x_n\|_{\infty}&\le\left(\sum_{n=1}^N \|y_n-x_n\|_{\infty}+\|x_n\|_{\infty}-z\right)^{+}-\lambda\sum_{n=1}^N\|y_n-x_n\|_{\infty}\notag\\
            &=(1-\lambda)\sum_{n=1}^N\|y_n-x_n\|_{\infty}+\left(\sum_{n=1}^N \|x_n\|_{\infty}-z\right)^{+}.
        \end{align}
        Thus, maximizing over $y$, we have the upper bound for the objective is $\left(\sum_{n=1}^N \|x_n\|_{\infty}-z\right)^{+}$. And this value is attained when $y_n=x_n$ for $n=1,\cdots,N$.
    \end{proof}

\section{Data Description}
\label{sec:data_description}
Here we provide more details on the data distributions used.
The symmetric logistic (SL) and asymmetric logistic (ASL) distributions are commonly used parametric models that have the property of max-stability.
We describe some of their properties and how they are applied in the mixture model.
We use the sampling algorithms described in~\citet{stephenson2003simulating} to generate all the datasets.

As mentioned, for computing the cumulative distribution, the central object of interest is the spectral measure. 
In each of these datasets, the spectral measure is given by a mixture of two spectral measures.
Our baseline model $P_0$ is fit to the mixture without consideration that the model contains two components and results in model misspecification.

\subsection{Symmetric Logistic Mixture}
For these experiments, we consider a mixture model with two components: one that is almost completely dependent and another that is almost completely independent. 
The component with dependence is sampled with lower probability than the one with independence. 
Therefore, a naive estimator would largely concentrate on the component with independence than the one with dependence. 
The dependence function for this distribution is given by
$$
A_{SL}(w) = \left (\sum_{i=1}^d w_i^{\frac1\alpha} \right )^\alpha, \quad \alpha \in (0, 1].
$$
Notice that the influence $\alpha$ is equal for all components of the vector.
We specifically sample the distribution with large dependence ($\alpha=0.1$) with probability $0.1$ whereas we sample the distribution with small dependence ($\alpha = 0.9$) with probability $0.9$.

\subsection{Asymmetric Logistic Mixture}
While the SL distribution is symmetric and all components have the same level of dependence, the ASL can impose specific dependencies between any subset of variables.
Denoting $\mathcal{P}_d$ as the power set of $\{1, \ldots, d \}$, the dependence function for this family of distributions is given by
$$
A_{ASL}(w) = \sum_{b \in \mathcal{P}_d} \left (\sum_{i \in b} ( \lambda_{i, b} w_i)^{\frac{1}{\alpha_b}} \right )^{\alpha_b}, \quad w \in \Delta_{d-1}, \alpha \in (0, 1), \lambda_b \in \Delta_{|b|-1}. 
$$

We then consider a mixture of two different components, one again with a heavier tail and another with a lighter tail.
This is given by setting $\lambda$ to be the same random point on the simplex for both mixture components but then considering $\alpha = 0.1$ and $\alpha=0.9$ for the components with high dependence and low dependence, respectively. 

\subsection{Financial Data and Visualization}
\label{sec:fin_data_description}
Below we've provided a visualization of the maximum daily returns taken over weekly blocks, and maximum daily returns taken over annual blocks. The returns are provided for the eleven industries, where the values are averages taken across selected companies uniquely belonging to each industry. To compute the average of each industry, we first selected the 93 members of the S\&P 500 that have been listed on the index since January 1983, in order to provide forty years of financial data. Each of these company's daily returns were averaged across their assigned industry to create the average daily returns for each industry over forty years.

\begin{figure}[H]
\begin{adjustbox}{left}
   \begin{subfigure}{1.0\columnwidth}
      \includegraphics[width=\linewidth]{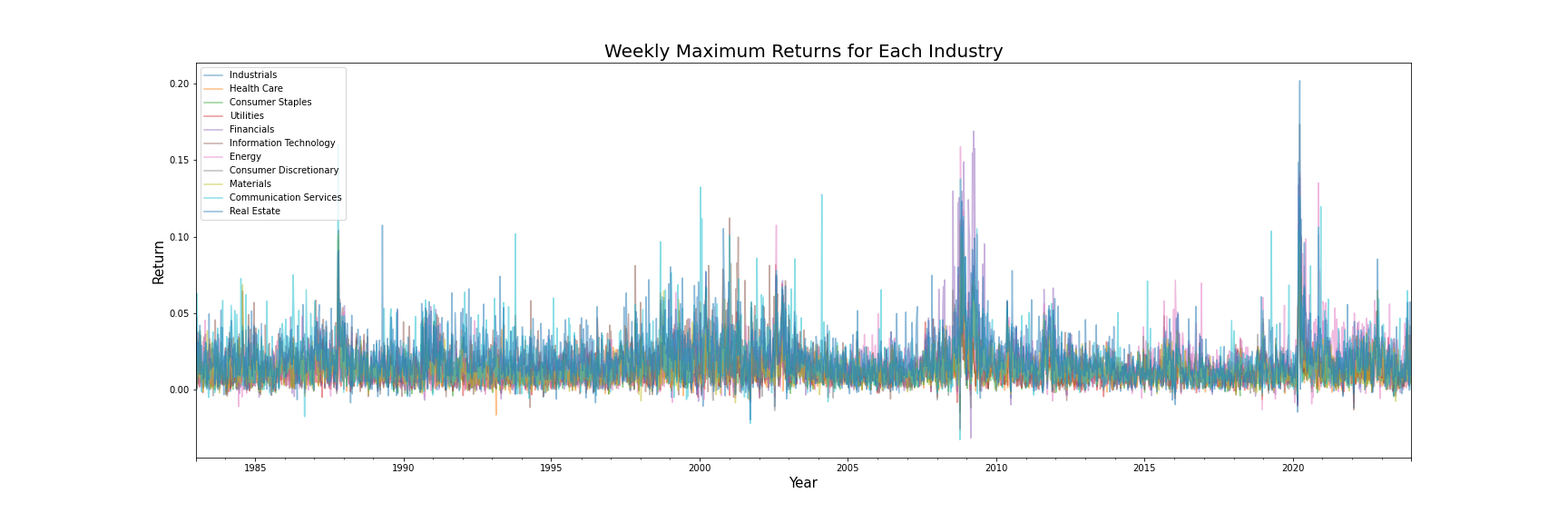}
      \caption{Maximum Weekly Returns}
      \label{fig:weekly}
   \end{subfigure}
\end{adjustbox}
\begin{adjustbox}{left}
   \begin{subfigure}{1.0\columnwidth}
      \includegraphics[width=\linewidth]{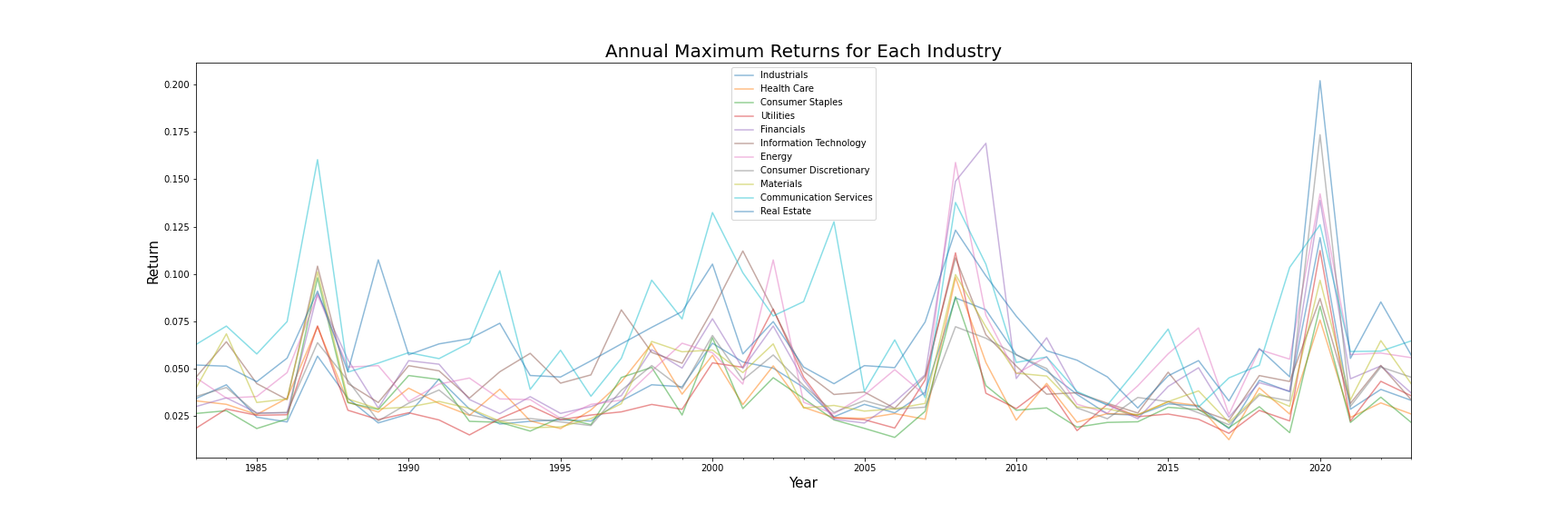}
      \caption{Maximum Annual Returns}
      \label{fig:annual}
   \end{subfigure}
\end{adjustbox}
\caption{Extreme Return data for Eleven Industries }\label{fig:graphs}
\label{fig:returns}
\end{figure}

\section{Algorithm Details}
Here we provide additional details that were left out of the main manuscript. 
Specifically, we go over the {\bf Maximize} and {\bf Minimize} steps in greater detail.
The basic idea is we have an iterative training algorithm similar to the $\min-\max$ problems in generative adversarial networks.
$\lambda$ is initialized according to $\frac{1}{\delta + \varepsilon}$ for some small $\varepsilon$. 
$P$ is initialized according to the same parameters as $P_0$.
For each of the $K$ iterations, we

{\bf Maximization step:}
We fix $\lambda$ to be the value from the last iteration.
\begin{enumerate}
    \item Sample $N$ points from the base distribution $X^{(n)} \sim P_0, \; \; n=1,\ldots,N$ to obtain the $X$ points from the base distribution.
    \item Sample $N$ points from the current estimate of the adversarial distribution $P$ to get $\tilde{X}^{(n)} \sim P, \;\; n = 1 \ldots N$.
    \item Compute $\mathcal{L}$ for each $X^{(n)}, \tilde{X}^{(n)}$ which results in a $N \times N$ matrix of values, i.e. 
    \begin{equation}
    \mathbf{L} :=
    \begin{pmatrix}
    \mathcal{L}_\lambda(\tilde{X}^{(1)}, X^{(1)}) & \cdots & \mathcal{L}_\lambda(\tilde{X}^{(1)}, X^{(n)}) \\
    \vdots & \ddots & \vdots \\
    \mathcal{L}_\lambda(\tilde{X}^{(n)}, X^{(1)}) & \cdots & \mathcal{L}_\lambda(\tilde{X}^{(n)}, X^{(n)}) \\
    \end{pmatrix}
    \label{eq:matrix_L}
    \end{equation}
    \item Take the $\max$ over the $\tilde{X}^{(n)}$ dimension, i.e. $\max_{i} \mathbf{L}_{i, \cdot}$
    \item Average over the remaining $Y^{(n)}$ dimension, i.e. $R = \frac1N \sum_{j=1}^N \max_{i=1}^N \mathbf{L}_{i, j}$.
    \item Compute the gradient of $R$ with respect to the parameters of $P$, update the parameters of $P$ via gradient descent.
\end{enumerate}
Adding different constraints to the problem results in different forms of the $\tilde{Y}$, which we describe in the next section.

{\bf Minimization step:}
Now we fix the parameters of $P$ and only minimize over $\lambda$.
\begin{enumerate}
    \item Compute $\hat{R} = \lambda \delta + \frac1N \sum_{j=1}^N \max_{i=1}^N \mathbf{L}$.
    \item Compute gradient of $\hat{R}$ with respect to $\lambda$, update according to gradient descent.
\end{enumerate}
We next describe a few algorithmic considerations of the maximization step.

\subsection{Unconstrained Case}
When sampling from an MEV, one samples according to $A^{(n)}, Y^{(n)}, \;\; n=1,\ldots, N, N \gg 0$.
Then samples are generated according to $\max_{n=1}^N \frac{Y^{(n)}}{A^{(n)}}$ (e.g. see~\citet{dombry2016exact} for additional considerations).
In the unconstrained problem, we let the $X^{(i)}$ in~\eqref{eq:matrix_L} be the MEV, i.e. $X^{(i)} = \max_{n=1}^N \frac{Y^{(i,n)}}{A^{(i,n)}}$. 
% {\color{blue} Have we used this notation before}
Then, $P$ does not use any of the factorization of the $Y^{(n)}/A^{(n)}$ and is free to be whatever distribution maximizes the quantity.
The problem is unconstrained because it does not consider any of the constraints given by EVT, specifically that the factorization in terms of the radial $A$ and spectral component $Y$ must hold.
\subsection{Constrained Case}
On the other hand, we consider the explicit factorization in the constrained case.
Here we share the $A^{(n)}$ across both distributions of $P, P_0$. 
Then, $X$ remains as described in the unconstrained case but $\tilde{X}$ is given by $\tilde{X}^{(i)} = \max_{n=1}^N \tilde{Y}^{(i,n)}/A^{(i,n)}$ where the $\tilde{Y}^{(i,n)}$ are sampled from the estimated $P$ distribution.
Note that the $A^{(n)}$ are the same for the calculation of $X$ and $\tilde{X}$.
The constraints then inform the adversarial distribution that it must be max-stable and that the radial component is consistent between both.

\subsection{Computational Details and Code Reference}
\label{sec:comp_description}
The code base for the DRO MEV model, written in python, is available at the following github link:  \url{https://github.com/patrick-kuiper/mev_dro}. Below, in Table \ref{fig:time_table}, we provide the run time for each of the synthetic data experiments: Unconstrained, Constrained Extreme Value Distributed (EVD), and Constrained Extreme Value Distributed: with Unit Margin (EVD-SM). We chose to highlight the synthetic data sets as they represent the most challenging experiments computationally. Each of these experiments were run individually on NVIDIA GeForce RTX 3090 GPUs with CUDA Version 12.2. Each experiment was run for 2001 epochs.  

\begin{table}[h!]
\caption{Experiment time comparison table.}
\centering
\begin{tabular}{|c|c|c|}
\hline
\textbf{Data Type}  & \textbf{Experiment Type} & \textbf{\begin{tabular}[c]{@{}c@{}}Run Time\\ (H:M:S)\end{tabular}} \\ \hline
Symmetric Logistic  & Unconstrained            & 0:09:00                                                             \\ \hline
Symmetric Logistic  & Constrained: EVD         & 0:11:21                                                             \\ \hline
Symmetric Logistic  & Constrained: EVD-SM      & 0:12:22                                                             \\ \hline
Asymmetric Logistic & Unconstrained            & 0:08:56                                                             \\ \hline
Asymmetric Logistic & Constrained: EVD         & 0:12:19                                                             \\ \hline
Asymmetric Logistic & Constrained: EVD-SM      & 0:13:22                                                             \\ \hline
\end{tabular}
\label{fig:time_table}
\end{table}

\end{document}